\documentclass[sigconf]{acmart}

\copyrightyear{2026}
\acmYear{2026}

\setcopyright{cc}
\setcctype{by}

\acmConference[MM '26]
{Proceedings of the 34th ACM International Conference on Multimedia}
{November 10--14, 2026}
{Rio de Janeiro, Brazil}

\acmBooktitle{Proceedings of the 34th ACM International Conference on Multimedia (MM '26), November 10--14, 2026, Rio de Janeiro, Brazil}

\acmDOI{10.1145/3767308.3836463}

\acmISBN{979-8-4007-2213-4/2026/11}

\usepackage{hyperref}
\usepackage{multirow}
\usepackage{threeparttable}
\usepackage{algorithm}
\usepackage{algpseudocode}
\usepackage{xcolor}
\usepackage{enumitem}
\usepackage{graphicx}
\usepackage{subfig}
\usepackage{tikz}
\usepackage{booktabs}
\usepackage{amsthm}
\usepackage[most]{tcolorbox}

\newtheorem{proposition}{Proposition}
\newtheorem{corollary}{Corollary}

\usetikzlibrary{positioning,fit}

\renewcommand{\shortauthors}{Zhan et al.}

\definecolor{indigo}{HTML}{4B0082}
\definecolor{demolinkcolor}{HTML}{0B5FA5}
\hypersetup{
  colorlinks=true,
  urlcolor=demolinkcolor,
  linkcolor=demolinkcolor,
  citecolor=demolinkcolor,
}

\begin{document}

\setlength{\textfloatsep}{8pt plus 2pt minus 2pt}
\setlength{\floatsep}{8pt plus 2pt minus 2pt}
\setlength{\intextsep}{8pt plus 2pt minus 2pt}

\title{Seeing is Free, Speaking is Not: Uncovering the True Energy Bottleneck in Edge VLM Inference}

\author{Junfei Zhan}
\affiliation{%
  \department{Department of Computing}
  \institution{Imperial College London}
  \city{London}
  \country{United Kingdom}
}
\email{j.zhan26@imperial.ac.uk}

\author{Haoxun Shen}
\affiliation{%
  \department{Department of Electrical and Systems Engineering}
  \institution{University of Pennsylvania}
  \city{Philadelphia}
  \state{PA}
  \country{United States}
}
\email{haoxun@engineering.upenn.edu}

\author{Mingang Guo}
\affiliation{%
  \department{Department of Electrical and Systems Engineering}
  \institution{University of Pennsylvania}
  \city{Philadelphia}
  \state{PA}
  \country{United States}
}
\email{gmingang@engineering.upenn.edu}

\author{Zixuan Huang}
\affiliation{%
  \institution{Xiaohongshu Inc}
  \city{Shanghai}
  \country{China}
}
\email{huangzixuan1@xiaohongshu.com}

\author{Tengjiao He}
\authornote{Corresponding author.}
\affiliation{%
  \department{College of Information Science and Technology}
  \institution{Jinan University}
  \city{Guangzhou}
  \country{China}
}
\email{htj2018@jnu.edu.cn}


\begin{abstract}
Vision-Language Models (VLMs) are the perceptual backbone of embodied AI, but their energy footprint on edge hardware remains poorly understood. Existing efficiency efforts focus predominantly on reducing visual tokens, implicitly treating visual processing as the dominant energy cost.
We overturn this implicit assumption through the first systematic energy profiling of on-device VLM inference, spanning five models across three architecture families, four input resolutions, and two hardware platforms (NVIDIA RTX 3070 and Jetson Orin NX).
Our analysis yields three findings.
First, average inference power is a model-intrinsic constant, invariant to input resolution, image complexity, and prompt type, with less than 5\% variation across all conditions. This means that all energy variation across inputs must arise from variation in inference time, not from variation in power draw.
Second, each output token costs 11 to 39$\times$ more wall-clock time than each input token due to the compute-bound and memory-bound asymmetry between prefill and decode, making output token count the dominant driver of both latency and energy.
%
Third, image complexity, measured by the number of objects in an image, induces up to 4.1$\times$ energy differences at identical resolution. This variation arises not from increased visual processing cost, but from differences in output length.
These findings expose a fundamental limitation of visual token pruning: even removing all visual tokens saves at most 10\% of total energy for fixed-token models. Across models spanning 1 billion to 8 billion parameters, controlling output length saves up to 97\% of total energy, with the energy dominance of decoding growing stronger at larger model scale. In short, the true energy bottleneck in edge VLM inference is not what the model sees, but how much it says.
Code is available at \url{https://github.com/Junfei-Z/seeing-is-free}.
\end{abstract}

\begin{CCSXML}
<ccs2012>
  <concept>
    <concept_id>10002951</concept_id>
    <concept_desc>Information systems</concept_desc>
    <concept_significance>500</concept_significance>
  </concept>
</ccs2012>
\end{CCSXML}

\ccsdesc[500]{Computer systems organization~Embedded and cyber-physical systems}
\ccsdesc[500]{Computing methodologies~Machine learning}
\ccsdesc[300]{Hardware~Power and energy}

\keywords{Vision-Language Models, Energy Profiling, Edge Inference, 
On-Device AI, Embodied AI, Power Measurement}

\maketitle


%
\begin{figure}[t!]
    \centering
    \subfloat[Energy Decomposition]{\includegraphics[width=0.48\linewidth]{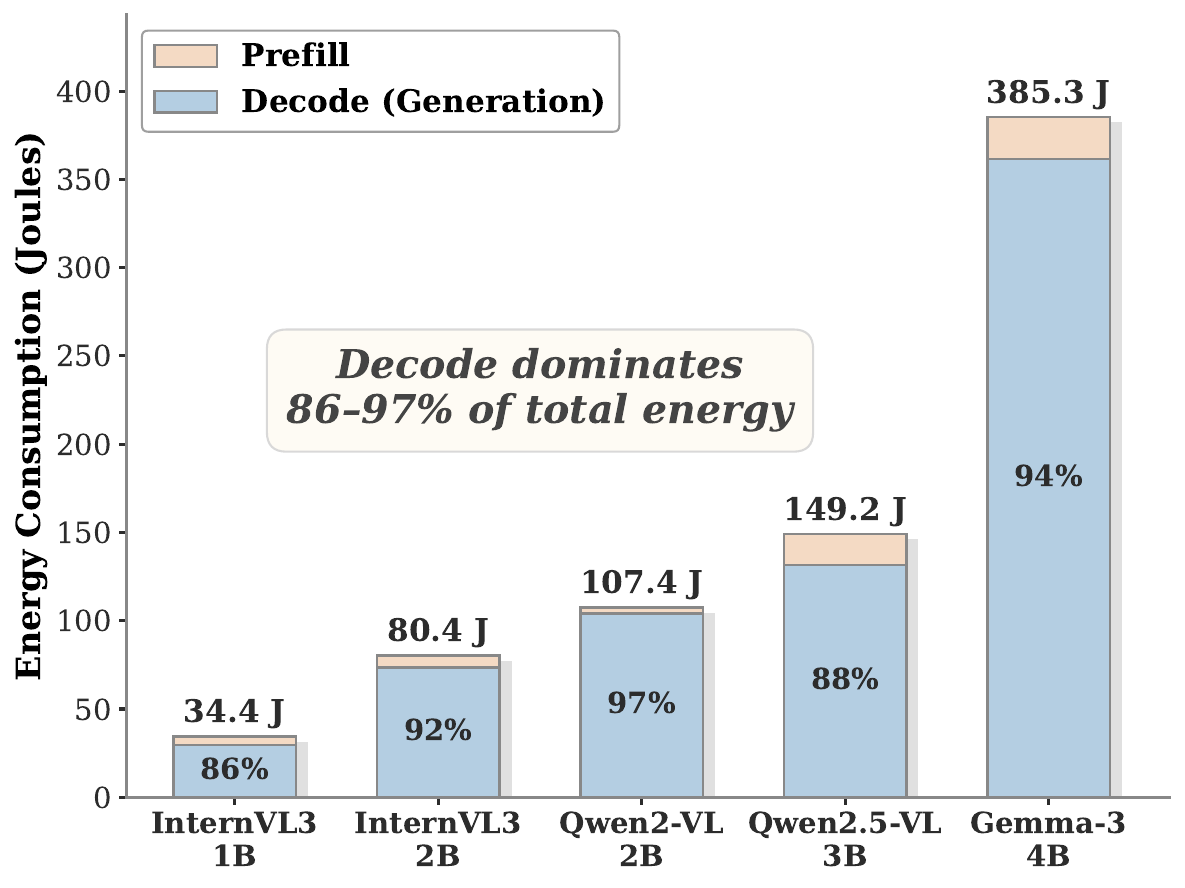}}
    \hfill 
    \subfloat[Per-token Cost Ratio]{\includegraphics[width=0.48\linewidth]{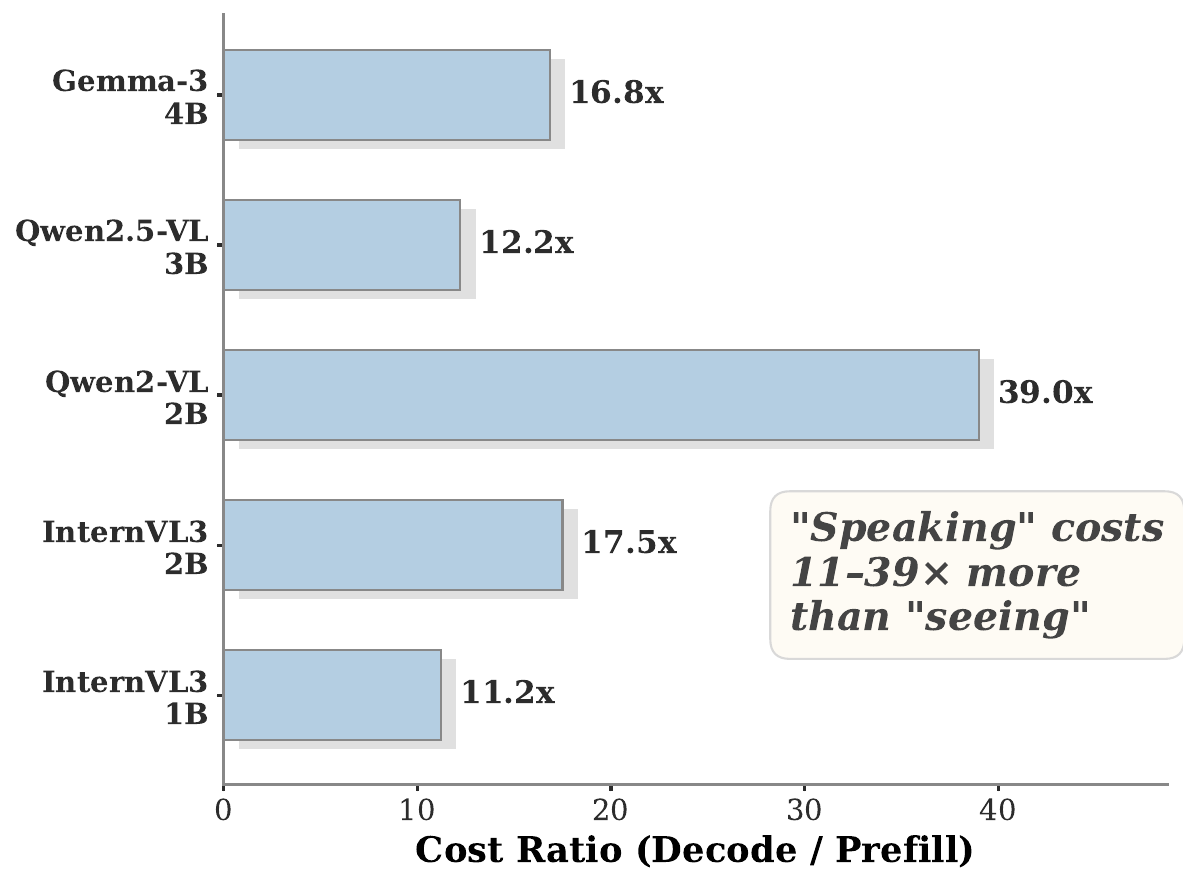}}
    \caption{Energy anatomy of VLM inference on the RTX~3070 at 448$\times$448. 
(a)~Decode dominates at 86--97\%. 
(b)~Each output token costs 11--39$\times$ more than each input token.}
    \label{fig:intro} 
\end{figure}

\section{Introduction}

Embodied AI agents are increasingly deployed in real-world tasks like robotic manipulation and drone inspection, demanding real-time visual reasoning and precise action execution. 
The Vision-Language-Action (VLA) paradigm~\cite{ma2024vlasurvey} currently dominates this landscape by employing Vision-Language Models (VLMs) as the central processing unit. 
Key milestones such as RT-2~\cite{brohan2023rt2} and PaLM-E~\cite{driess2023palme} illustrate the power of cross-modal knowledge transfer, while recent works like $\pi_0$~\cite{black2024pi0} and OpenVLA~\cite{kim2024openvla} demonstrate that even a 7B VLM backbone can surpass the 55B RT-2-X by 16.5\%, highlighting that compact VLMs already serve as effective cores for embodied agents.



%
Deploying these embodied agents requires running VLM inference directly on edge devices such as NVIDIA Jetson modules, mobile SoCs, and embedded GPUs for AR glasses, as cloud offloading is often impractical due to stringent latency, connectivity, and privacy constraints~\cite{abstreiter2025painful}.
To enable on-device inference, the community has developed a series of efficient VLMs for edge devices. 
For instance, MobileVLM~\cite{chu2024mobilevlm} achieves 21.5 tokens/sec on mobile SoCs, while FastVLM~\cite{vasu2025fastvlm} delivers an $85\times$ speedup in time-to-first-token compared to LLaVA-OneVision. SmolVLM~\cite{marafioti2025smolvlm} further reduces the model size to 256M parameters, requiring less than 1\,GB of GPU memory. However, even these efficient models must operate within strict hardware budgets. Edge batteries typically provide only 40 to 100\,Wh, whereas a single VLM query can consume between 50 and 500\,Joules~\cite{arya2025understanding}. 
As inference queries accumulate over continuous perception-action cycles, total energy consumption can drain the battery within hours of operation. Consequently, energy consumption is not merely a secondary optimization target but a binding constraint that dictates the operational capacity of embodied agents.

%
Despite the growing importance of energy efficiency for on-device VLMs, no existing work has systematically measured the energy consumption of multimodal VLM inference on edge hardware.
While energy profiling has been studied for text-only Large Language Models (LLMs)~\cite{xu2025camel,husom2025sustainable,jain2025clear}, these works do not extend to multimodal settings, leaving the energy impact of vision encoding uncharacterized. Extending these methods to VLMs is not straightforward, as multimodal inference interleaves vision encoding, prefill, and decode phases with distinct computational characteristics, making it difficult to attribute energy to individual stages~\cite{pope2023efficiently}.
Without such measurements, existing VLM efficiency works~\cite{chen2024fastv,xing2025pyramiddrop,yang2025visionzip} optimize visual token reduction based on computational cost alone, implicitly assuming that visual token processing is the dominant energy cost. This assumption has never been empirically verified. 
If it does not hold, then optimizing visual tokens may yield far less energy savings than expected.

We directly test the implicit assumption that visual token processing is the primary energy burden in multimodal VLM inference, asking: \textit{where do the Joules go in on-device VLM inference, and how can this understanding enable energy-aware deployment?} 
Through systematic energy profiling of five VLMs across three architecture families, four input resolutions, and two hardware platforms (NVIDIA RTX 3070 and Jetson Orin NX), we uncover a counterintuitive result: \emph{seeing is free, speaking is not}. As illustrated in Figure~\ref{fig:intro}, each output token costs 11 to 39$\times$ more energy than each input token including visual tokens, making output length the dominant energy variable. Across all models, output token count nearly perfectly predicts per-image energy, while image complexity can cause up to $4.1\times$ energy differences at identical resolution through its effect on output length. Meanwhile, the effect of input resolution on energy is comparatively modest: dynamic-token architectures such as Qwen2-VL~\cite{wang2024qwen2vl,bai2025qwen25vl} show 8 to 24\% energy increase from 224$\times$224 to 896$\times$896, whereas fixed-token architectures such as Gemma-3~\cite{gemmateam2025gemma3} and InternVL3~\cite{zhu2025internvl3} remain virtually unchanged. 
These findings reveal that the true energy bottleneck in on-device VLM inference is not \emph{what the model sees}, but \emph{how much it says}. Specifically, our contributions are as follows:


\begin{itemize}[leftmargin=*,nosep]
    \item \textbf{Power is a model fingerprint.} We show that average inference power draw is a model-intrinsic constant, invariant to input resolution, image complexity, and prompt type, with less than 5\% variation across all conditions. This reduces energy analysis to understanding what determines inference time.
    %
    %
    \item \textbf{Energy decomposition.} We decompose inference energy into prefill and decode phases and show that autoregressive decoding accounts for 86 to 97\% of total energy, driven by the compute-bound and memory-bound asymmetry between the two phases.
    \item \textbf{Visual token pruning upper bound.} We derive a theoretical upper bound showing that even removing all visual tokens saves at most 10\% of total energy for fixed-token models, while controlling output length saves up to 97\%.
    \item \textbf{Cross-model energy predictor.} We show that a linear model with five features, namely model size, input token count, output token count, and two interaction terms, explains 98.6\% of energy variance across all configurations without per-model calibration, validating that VLM inference energy is low-dimensional and architecturally determined.

\end{itemize}


\section{Preliminaries: VLM Inference on Edge}
\label{sec:prelim}

We review the inference pipeline of VLMs to establish the terminology and computational stages used throughout this paper (Figure~\ref{fig:pipeline}).

\begin{figure*}
    \centering
    \small
    \includegraphics[width=\linewidth]{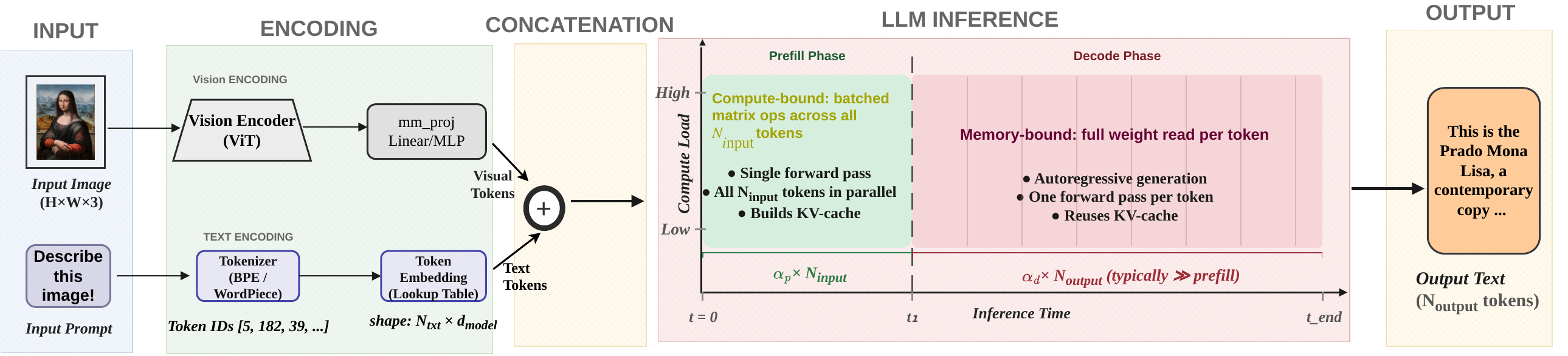}
    \caption{VLM inference pipeline. An input image and text prompt are encoded separately into visual and text tokens, concatenated into a unified sequence, and processed by the language model in two phases: a parallel prefill followed by sequential autoregressive decoding.}
    \label{fig:pipeline}
\end{figure*}

\subsection{Input Encoding and Visual Token Strategies}
\label{sec:prelim:encoding}
A VLM receives an image of height $H$ and width $W$ along with a text prompt.
On the vision side, a \emph{vision encoder}, typically a Vision Transformer (ViT)~\cite{dosovitskiy2021vit}, partitions the input image into fixed-size patches of $p \times p$ pixels and maps each patch into an embedding vector.
A \emph{modality projector} (\texttt{mm\_proj}), commonly a linear layer or shallow MLP, then maps these embeddings into the language model's token space, producing $N_{\text{vis}}$ \emph{visual tokens}.
On the text side, a subword tokenizer converts the prompt into $N_{\text{text}}$ \emph{text tokens}, each mapped to a $d_{\text{model}}$-dimensional embedding.

The number of visual tokens $N_{\text{vis}}$ depends on the model's \emph{visual token strategy}, which defines how input resolution maps to token count.
We distinguish two strategies among contemporary architectures.
The first is \textbf{fixed-token architectures}.
Models such as InternVL3~\cite{zhu2025internvl3} and Gemma-3~\cite{gemmateam2025gemma3} resize all images to a predefined internal resolution before encoding, producing a constant number of visual tokens regardless of input resolution:
\begin{equation}
\label{eq:fixed-token}
N_{\text{vis}} = \left\lfloor\frac{R_{\text{internal}}}{p}\right\rfloor^2,
\end{equation}
where $R_{\text{internal}}$ is the fixed internal resolution and $p$ is the ViT patch size.
For example, InternVL3 uses a 490-pixel internal resolution with patch size 14, yielding $(490/14)^2 = 1{,}225$ patch embeddings that are downsampled to 265 tokens via a pixel-shuffle projector. This means that whether the original image is $224\times224$ or $896\times896$, the vision encoder always produces the same number of tokens, and the additional resolution information is discarded during resizing.

The second is \textbf{dynamic-resolution architectures}.
Models such as Qwen2-VL~\cite{wang2024qwen2vl} and Qwen2.5-VL~\cite{bai2025qwen25vl} process images at or near their native resolution, so $N_{\text{vis}}$ scales with the input image area can be approximated as:
\begin{equation}
\label{eq:dynamic-token}
N_{\text{vis}} = \left\lfloor\frac{H}{p \cdot s}\right\rfloor \times \left\lfloor\frac{W}{p \cdot s}\right\rfloor,
\end{equation}
where $s$ is a spatial merge factor that groups neighboring patches into a single token before feeding them to the language model. For Qwen2-VL, $s=2$, meaning every $2\times2$ block of patches is merged into one visual token.
In practice, $N_{\text{vis}}$ ranges from 73 tokens at $224\times224$ to over 1{,}000 tokens at $896\times896$. This resolution-dependent scaling has direct implications for inference cost, as we quantify in Section~\ref{sec:time}.

\subsection{Two-Phase LLM Inference}
\label{sec:prelim:inference}
The visual and text token sequences are concatenated, together with any system prompt tokens $N_{\text{sys}}$, to form a unified input sequence of length $N_{\text{in}} = N_{\text{sys}} + N_{\text{vis}} + N_{\text{text}}$.
The language model processes this sequence in two computational phases~\cite{pope2023efficiently}.

During \emph{prefill}, all $N_{\text{in}}$ tokens are processed in a single parallel forward pass through the transformer.
Because all positions are known in advance, the computation performs large matrix multiplications over the entire input sequence, achieving high \emph{arithmetic intensity} (FLOPs per byte of memory traffic) that fully saturates the GPU's compute units.
The prefill phase populates the key-value (KV) cache, which stores the intermediate representations of all input tokens for reuse during decoding.

During \emph{decode}, output tokens are generated one at a time in an autoregressive loop.
At each step, the model reads the full set of weights from memory but performs computation for only a single token, yielding low arithmetic intensity that is bottlenecked by \emph{memory bandwidth} rather than compute.
The newly generated token is appended to the KV cache, and the process repeats until the model emits an end-of-sequence token or reaches the maximum output length.
Total inference time thus decomposes as:
\begin{equation}
\label{eq:time-total}
t = t_{\text{prefill}}(N_{\text{in}}) + t_{\text{decode}}(N_{\text{out}}),
\end{equation}
where the per-token cost of decode far exceeds that of prefill due to this compute-bound and memory-bound dichotomy, a point we quantify in Section~\ref{sec:time}.

\section{Power is a Model's Fingerprint}
\label{sec:power}

We begin by asking what drives the energy cost of a single VLM inference.
The total energy $E$ of a single inference is the product of two quantities: average power $\bar{P}$ and inference time $t$, giving $E = \bar{P} \times t$.
We first investigate the power term by measuring the per-inference average power, defined as the total energy of a single inference run divided by its wall-clock duration.

We conduct experiments on two edge hardware platforms: an NVIDIA RTX~3070 Laptop GPU (8\,GB VRAM, frequency locked at 1500\,MHz) and an NVIDIA Jetson Orin NX (15\,W power mode with clocks locked via \textit{jetson\_clocks}).
All models are served via \textit{llama-server} (llama.cpp) with greedy decoding ($T{=}0$).
Per-inference average power is computed from power samples collected at 100\,ms intervals: via HWiNFO64 Pro (total system power) on the laptop, and via the onboard INA3221 sensor (VDD\_IN rail) on the Jetson.
As summarized in Table~\ref{tab:models}, we profile five VLMs spanning three architectural families, 1B--4B parameters, 
and both fixed-token and dynamic-resolution visual token strategies.

\begin{table}[t]
\centering
\caption{Profiled VLMs and their visual token strategies.}
\label{tab:models}
\resizebox{\linewidth}{!}{%
\begin{tabular}{llccccc}
\toprule
\textbf{Model} & \textbf{Vision Encoder} & \textbf{Params} & \textbf{Quant} & \textbf{Token Strategy} & \textbf{Vis Tokens} \\
\midrule
InternVL3-1B  & InternViT & 1B & Q8\_0    & Fixed   & 265 / 265 / 265 / 265    \\
InternVL3-2B  & InternViT & 2B & Q4\_K\_M & Fixed   & 265 / 265 / 265 / 265     \\
Qwen2-VL-2B   & Qwen-ViT  & 2B & Q4\_K\_M & Dynamic & 73 / 265 / 585 / 1033    \\
Qwen2.5-VL-3B & Qwen-ViT  & 3B & Q4\_K\_M & Dynamic & 73 / 265 / 585 / 1033    \\
Gemma-3-4B    & SigLIP    & 4B & Q4\_K\_M & Fixed   & 265 / 265 / 265 / 265   \\
\bottomrule
\end{tabular}%
}
\end{table}

We systematically vary three factors that practitioners might expect to influence power draw: input resolution, image content complexity, and task prompt.
Surprisingly, none of them does.

\subsection{Power is Resolution-Invariant}
\label{sec:power_resolution}

We first examine whether input resolution affects per-inference average power.
On the RTX~3070, each of the five models listed in Table~\ref{tab:models} is evaluated across four resolutions ($224^2$ to $896^2$) over 12 runs per configuration using COCO~2017 validation images~\cite{lin2014microsoft}, with the first two runs discarded as warmup.

As shown in Figure~\ref{fig:power_resolution}(a), within every model the power distributions across resolutions are nearly identical: InternVL3-1B draws $52.4 \pm 1.6$\,W, Qwen2.5-VL-3B draws $76.8 \pm 3.2$\,W, and Gemma-3-4B draws $90.5 \pm 0.5$\,W, with per-model coefficients of variation below 5\%.
This holds for both fixed-token and dynamic-resolution models (see Table~\ref{tab:models}).
Notably, the 14$\times$ increase in visual tokens for Qwen2.5-VL-3B from $224^2$ to $896^2$ produces no measurable change in average power.

Figure~\ref{fig:power_resolution}(b) further reveals that this resolution-invariant power scales linearly with model size.
A linear regression across all five models yields:
\begin{equation}
\bar{P} = 12.1 \, S + 42.2 \quad (R^2 = 0.918)
\label{eq:power}
\end{equation}
where $S$ is the number of parameters in billions.
Each additional billion parameters adds approximately 12\,W to the per-inference average power.

\begin{figure}[t]
    \centering
    \begin{minipage}[t]{0.48\linewidth}
        \centering
        \includegraphics[width=\linewidth]{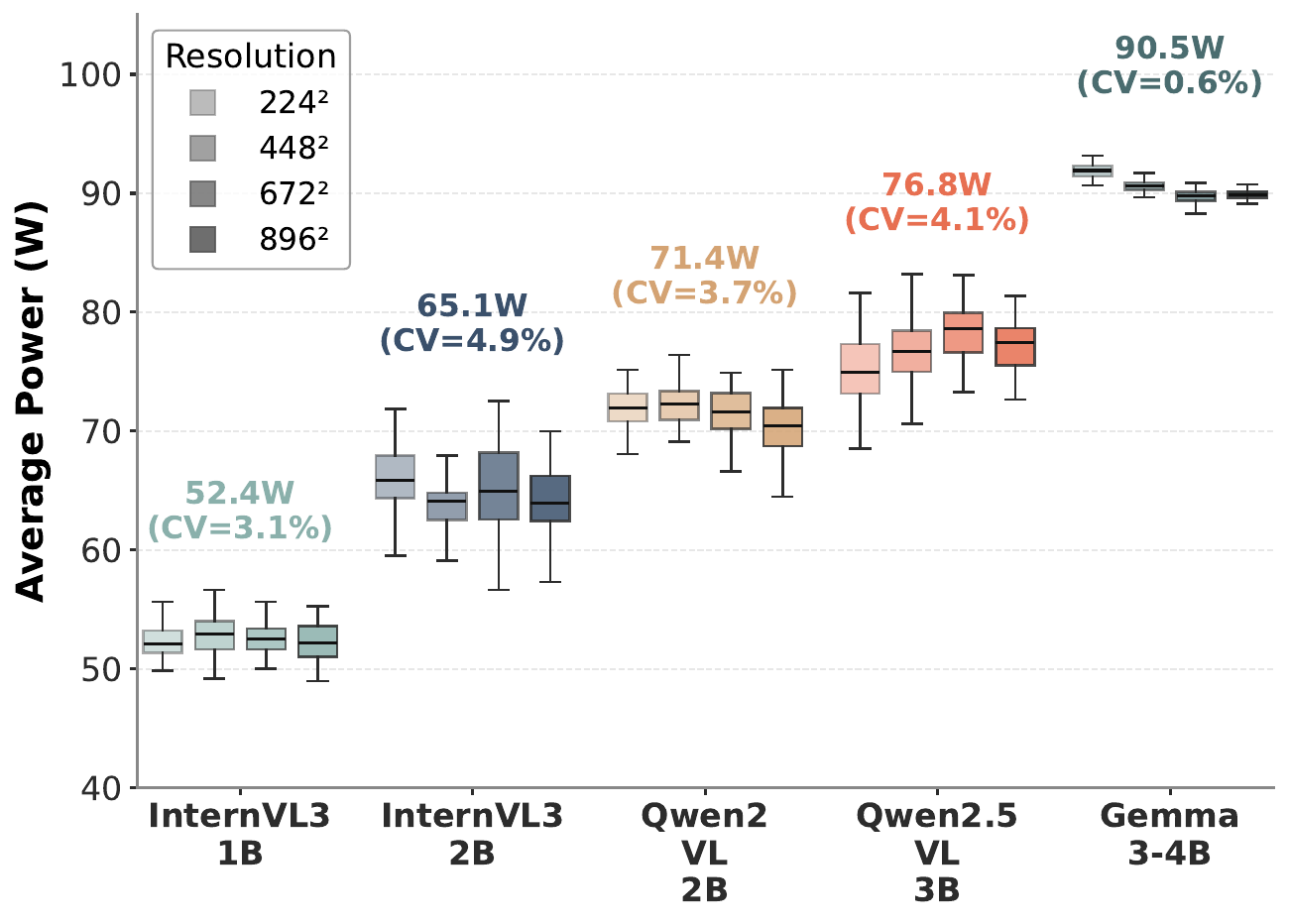}
        \vspace{-6mm}
        \caption*{(a) Power vs. resolutions}
    \end{minipage}
    \hfill
    \begin{minipage}[t]{0.48\linewidth}
        \centering
        \includegraphics[width=\linewidth]{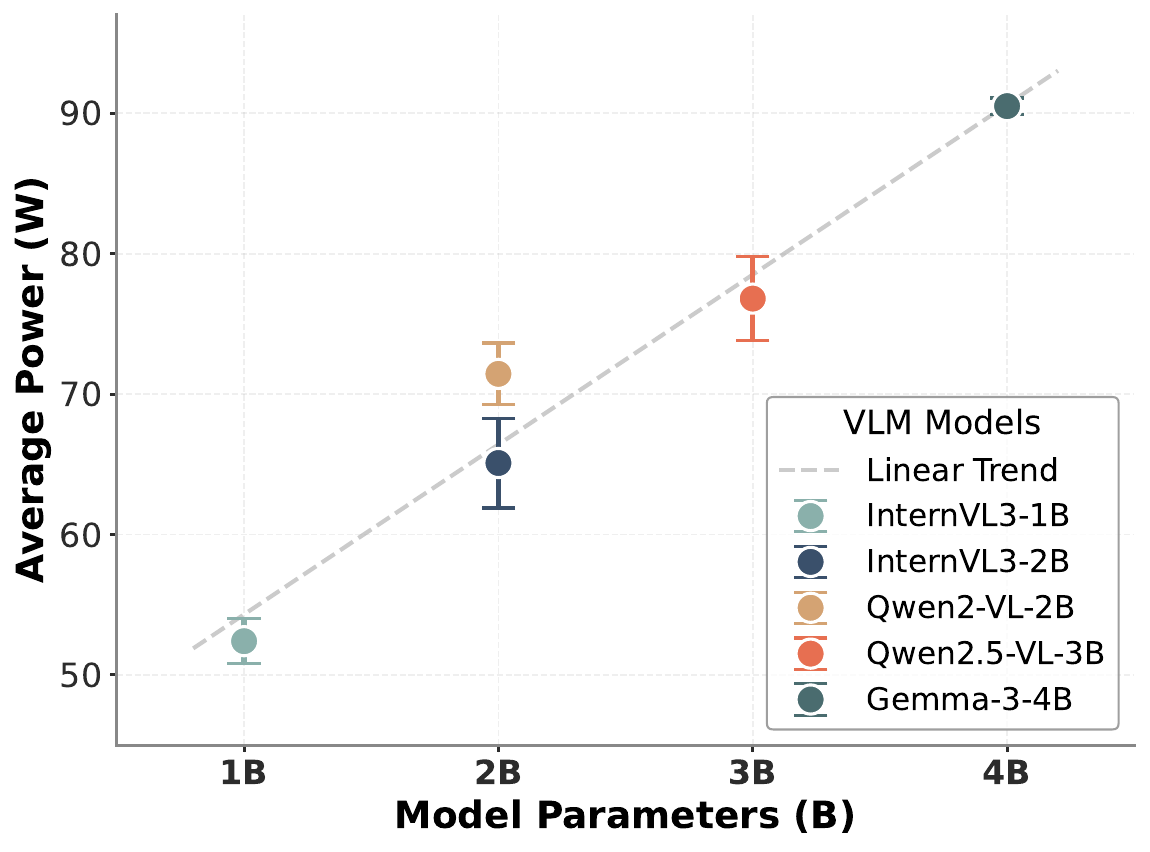}
        \vspace{-6mm}
        \caption*{(b) Power vs.\ model size}
    \end{minipage}
    \vspace{1mm}
    \caption{Per-inference average power across resolutions and model sizes.}
    \label{fig:power_resolution}
\end{figure}

\subsection{Power is Content-Invariant}
\label{sec:power_content}
Resolution invariance might be expected for fixed-token models, since their computational graph does not change with resolution.
A stronger test is whether image content, which directly affects what the model ``sees'' and how it responds, can modulate the per-inference average power.

To test this, we construct a dataset of 40 COCO~2017 images spanning six content complexity tiers based on annotated object counts: T1 (single object, minimal scene) through T6 (12--16 objects, complex scene).
We profile InternVL3-1B and Qwen2.5-VL-3B on a Jetson Orin NX at $448^2$ resolution.
As shown in Figure~\ref{fig:power_jetson}(a), despite a 14$\times$ increase in object count from T1 to T6, and despite these images eliciting vastly different output lengths (from 95 to 482 tokens on average), the per-inference average power remains flat: 14.6\,W (CV\,=\,1.1\%) for InternVL3-1B and 14.1\,W (CV\,=\,2.9\%) for Qwen2.5-VL-3B.
This result is noteworthy because, as we will show in Section~\ref{sec:time}, the \emph{energy} consumed by these images varies by up to 4$\times$.
However, all of that variation comes from differences in inference time, not from differences in power.

\subsection{Power is Prompt-Invariant}
\label{sec:power_prompt}
 
We test whether the task prompt affects power.
Since the prompt controls output length and thus the balance between prefill and decode computation, we design two prompts that elicit maximally different output lengths while keeping the input image identical.
On the same 40 images at $448^2$, we compare:
(I) \textbf{Describe}: ``Describe this image.'' This open-ended prompt elicits detailed captions (avg.\ 398 output tokens for InternVL3-1B).
(II) \textbf{Short}: ``What is the main object in this image? Answer in one word.'' This constrained prompt terminates almost immediately after prefill (avg.\ 3 output tokens, including the EOS token).
 
As shown in Figure~\ref{fig:power_jetson}(b), despite this roughly 100$\times$ difference in output length, the per-inference average power is indistinguishable between the two prompts: 14.6\,W vs.\ 14.8\,W for InternVL3-1B (CV\,=\,1.4\%), and 14.1\,W vs.\ 14.4\,W for Qwen2.5-VL-3B (CV\,=\,3.4\%).

\begin{figure*}[t]
    \centering
    \begin{minipage}[t]{0.48\linewidth}
        \centering
        \includegraphics[width=\linewidth]{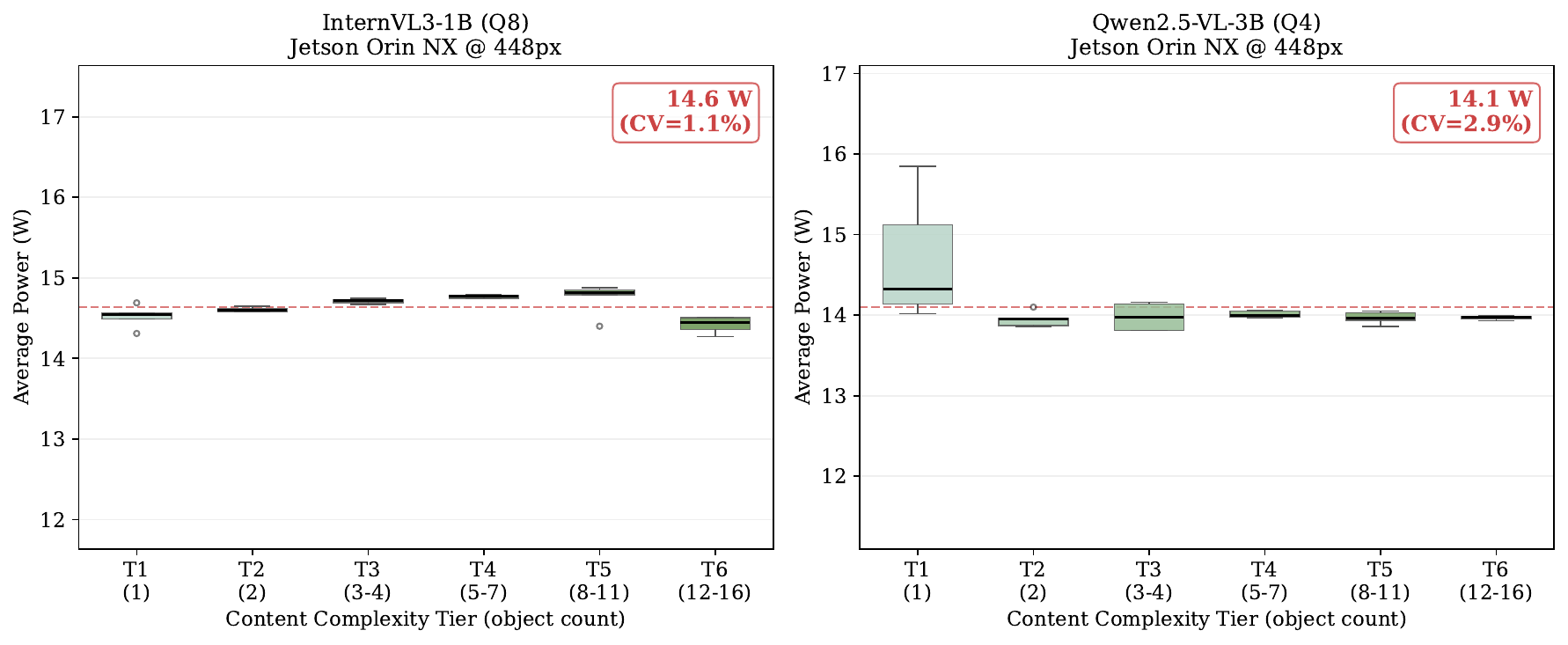}
        \vspace{-6mm}
        \caption*{(a) content complexity}
    \end{minipage}
    \hfill
    \begin{minipage}[t]{0.48\linewidth}
        \centering
        \includegraphics[width=\linewidth]{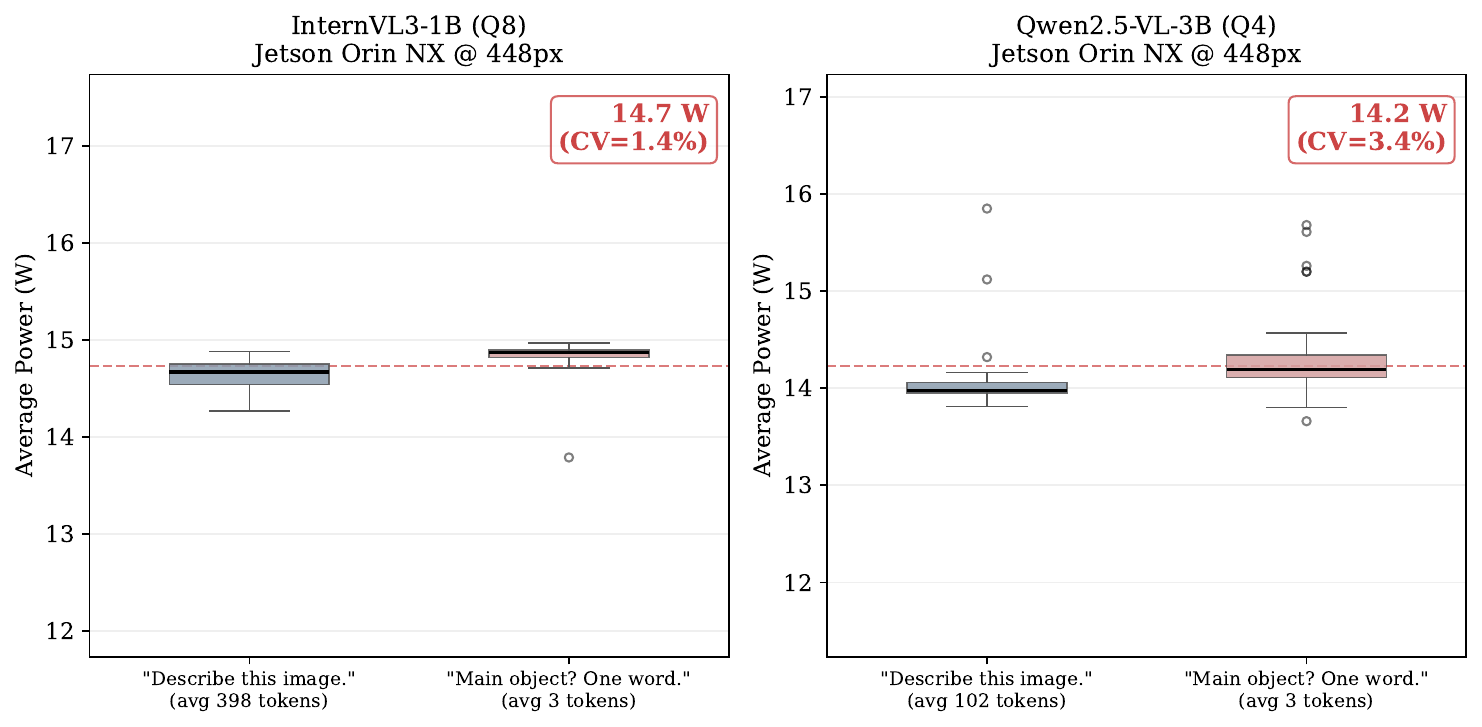}
        \vspace{-6mm}
        \caption*{(b) prompts}
    \end{minipage}
    \vspace{1mm}
    \caption{Power stability under varying input conditions.
(a) Image content complexity.
(b) Prompt type.}
    \label{fig:power_jetson}
\end{figure*}

\subsection{Implication: Energy $=$ Power $\times$ Time}
\label{sec:ept}

Section~\ref{sec:power_resolution}--\ref{sec:power_prompt} establish that per-inference average power is determined solely by the model's size and architecture, not by input resolution, image content, or task prompt.
We refer to this stable value as the model's \emph{power fingerprint}, denoted $\bar{P}$.
This finding is consistent across two hardware platforms (laptop GPU and embedded Jetson), five models spanning three architectural families, and a wide range of input conditions.

The underlying reason is that edge inference via llama.cpp is a sequential, token-by-token process: during prefill the GPU performs dense matrix multiplications over all input tokens, and during decode it performs the same operations one token at a time.
In both phases the GPU is fully saturated, so the instantaneous power draw remains constant regardless of what is being processed.
Changing the input image, resolution, or prompt alters \emph{how long} the GPU runs, but not \emph{how hard} it works at any given moment.

Since $\bar{P}$ is effectively constant for a given model--hardware pair, total inference energy simplifies to $E = \bar{P} \times t$, where $t$ is the total inference latency.
This has a direct consequence: \emph{all} energy variation across inputs, resolutions, and image content must arise from variation in inference time, not from variation in power draw.
The question ``where do the Joules go?'' therefore reduces to ``where does the time go?''

\section{Where Does the Time Go?}
\label{sec:time}

Section~\ref{sec:power} established that power draw is effectively constant during inference, reducing the energy equation to $E = \bar{P} \times t$.
The natural follow-up question is: what determines inference time $t$?
In this section, we decompose inference latency into its constituent phases and show that a single factor, the number of output tokens, overwhelmingly dominates wall-clock time across all models and hardware platforms we study.

\subsection{Two-Phase Decomposition}
\label{sec:time:decomp}

As reviewed in Section~\ref{sec:prelim}, autoregressive VLM inference proceeds in two phases: a compute-bound \emph{prefill} that processes all $N_{\text{in}}$ input tokens in a single parallel forward pass, and a memory-bandwidth-bound \emph{decode} that generates output tokens one at a time, each requiring a separate forward pass.
This architectural dichotomy yields a simple decomposition of wall-clock inference time:
\begin{equation}
\label{eq:time-decomp}
t_{\text{wall}} \;=\; t_{\text{prefill}}(N_{\text{in}}) \;+\; t_{\text{decode}}(N_{\text{out}}).
\end{equation}
Because prefill processes all input tokens in one pass while decode requires one pass per output token, the \emph{marginal cost} of an additional input token and an additional output token differ by at least an order of magnitude.
We formalize this asymmetry as follows.

\begin{definition}[Per-token marginal latency]
\label{def:marginal-cost}
Let $\alpha_p$ denote the marginal wall-clock cost of adding one input token to the prefill, and $\alpha_d$ the marginal wall-clock cost of generating one additional output token during decode:
\begin{equation}
\label{eq:alpha}
\alpha_p \;=\; \frac{\partial\, t_{\emph{prefill}}}{\partial\, N_{\emph{in}}}, \qquad
\alpha_d \;=\; \frac{\partial\, t_{\emph{decode}}}{\partial\, N_{\emph{out}}}.
\end{equation}
\end{definition}

Under these definitions, wall-clock time is well approximated by the linear latency model:
\begin{equation}
\label{eq:time-linear}
t_{\text{wall}} \;\approx\; \alpha_p \cdot N_{\text{in}} \;+\; \alpha_d \cdot N_{\text{out}} \;+\; \beta,
\end{equation}
where $\beta$ captures fixed overhead (model initialization, tokenization, sampling).
The key claim of this section is that $\alpha_d \gg \alpha_p$, which makes $N_{\text{out}}$ the dominant driver of inference time.

\subsection{The Decode-Prefill Cost Asymmetry}
\label{sec:time:asymmetry}

We estimate the parameters of Equation~\ref{eq:time-linear} empirically for all five models on the RTX~3070.
For models with dynamic visual tokenization (Qwen2-VL, Qwen2.5-VL), the number of input tokens $N_{\text{in}}$ varies across resolutions while $N_{\text{out}}$ varies across images, enabling direct multivariate regression.
For fixed-token models (InternVL3, Gemma-3), $N_{\text{in}}$ is constant across resolutions, so we estimate $\alpha_p$ by comparing the intercepts of latency-vs-$N_{\text{out}}$ regressions between multimodal and text-only conditions, which differ only in $N_{\text{in}}$.

Table~\ref{tab:alpha-beta} reports the fitted coefficients.
Across all models, the decode-to-prefill cost ratio $\alpha_d / \alpha_p$ ranges from 11$\times$ to 39$\times$, confirming the architectural prediction: each decode step performs a full weight-reading pass for a single token, while prefill amortizes the same pass across all input tokens simultaneously.

\begin{table}[t]
\centering
\caption{Fitted latency model parameters. The ratio $\alpha_d/\alpha_p$ quantifies the decode-prefill cost asymmetry.}
\label{tab:alpha-beta}
\small
\begin{tabular}{lccccr}
\toprule
\textbf{Model}  & $\alpha_p$ \textbf{(ms/tok)} & $\alpha_d$ \textbf{(ms/tok)} & $\beta$ \textbf{(ms)} & $\alpha_d / \alpha_p$ \\
\midrule
InternVL3-1B    & 0.28 & 3.14 & 178 & 11$\times$ \\
InternVL3-2B    & 0.30 & 5.25 & 258 & 17$\times$ \\
Qwen2-VL-2B   & 0.14 & 5.46 & 153 & 39$\times$ \\
Qwen2.5-VL-3B & 0.67 & 8.18 & 143 & 12$\times$ \\
Gemma-3-4B     & 0.74 & 12.46 & 50  & 17$\times$ \\
\bottomrule
\end{tabular}
\end{table}

The roofline model~\cite{williams2009roofline} provides a hardware-level explanation for this asymmetry.
During prefill, the accelerator performs $O(N_{\text{in}})$ multiply-accumulate operations per weight element loaded from memory, producing high \emph{arithmetic intensity} (FLOPs per byte transferred) that saturates the compute units.
During decode, each forward pass reads the full model weights but performs only $O(1)$ useful operations per weight element, resulting in low arithmetic intensity bottlenecked by memory bandwidth.
On the RTX~3070 (256\,GB/s memory bandwidth, 20.3\,TFLOPS FP32), the transition from compute-bound to memory-bound execution produces exactly the order-of-magnitude cost gap we observe.

\begin{table}[t]
\centering
\caption{Prefill and decode time breakdown. $\rho_{\text{decode}}$ denotes the decode fraction of total phase time.}
\label{tab:jetson-timing}
\small
\resizebox{\columnwidth}{!}{%
\begin{tabular}{llccccc}
\toprule
\textbf{Factor} & \textbf{Condition} & $N_{\text{vis}}$ & $\bar{N}_{\text{out}}$ & $t_{\text{prefill}}$ \textbf{(s)} & $t_{\text{decode}}$ \textbf{(s)} & $\rho_{\text{decode}}$ \\
\midrule
\multicolumn{7}{l}{\textit{InternVL3-1B (Q8), Describe prompt}} \\
\midrule
Resolution & 224$\times$224 & 265 & 271 & 1.8 & 11.8 & 0.87 \\
           & 448$\times$448 & 265 & 398 & 1.8 & 18.0 & 0.91 \\
           & 672$\times$672 & 265 & 371 & 1.8 & 16.6 & 0.90 \\
           & 896$\times$896 & 265 & 371 & 2.0 & 17.3 & 0.90 \\
\cmidrule{2-7}
Content    & 1--2 objects    & 265 & 323 & 1.8 & 14.4 & 0.89 \\
           & 3--5 objects    & 265 & 295 & 1.9 & 13.4 & 0.88 \\
           & 6--10 objects   & 265 & 392 & 1.8 & 17.5 & 0.91 \\
           & 11+ objects     & 265 & 413 & 1.9 & 19.3 & 0.91 \\
\cmidrule{2-7}
Prompt     & Describe         & 265 & 398 & 1.8 & 18.0 & 0.91 \\
           & Short answer     & 265 & 3   & 1.8 & 0.07 & 0.04 \\
\midrule
\multicolumn{7}{l}{\textit{Qwen2.5-VL-3B (Q4), Describe prompt}} \\
\midrule
Resolution & 224$\times$224 & 73   & 146 & 1.6  & 23.5 & 0.94 \\
           & 448$\times$448 & 265  & 102 & 6.3  & 16.6 & 0.72 \\
           & 672$\times$672 & 585  & 107 & 15.8 & 17.7 & 0.53 \\
           & 896$\times$896 & 1033 & 109 & 18.7 & 15.8 & 0.46 \\
\cmidrule{2-7}
Content    & 1--2 objects    & mixed & 107 & 11.0 & 16.5 & 0.60 \\
           & 3--5 objects    & mixed & 95  & 10.2 & 15.0 & 0.60 \\
           & 6--10 objects   & mixed & 145 & 10.3 & 23.3 & 0.69 \\
           & 11+ objects     & mixed & 120 & 10.7 & 19.3 & 0.67 \\
\cmidrule{2-7}
Prompt     & Describe         & 265  & 102 & 6.3  & 16.6 & 0.72 \\
           & Short answer     & 265  & 3   & 5.7  & 0.2  & 0.04 \\
\bottomrule
\end{tabular}
}
\end{table}

\subsection{What Determines the Time Split?}
\label{sec:time:split}
 
The asymmetry $\alpha_d \gg \alpha_p$ establishes that decode is inherently more expensive per token, but the actual fraction of wall time spent in each phase depends on three factors that vary across inference instances: input resolution, image content, and prompt type.
We use the Jetson Orin NX to disentangle these factors, as its inference server reports per-phase timings (\texttt{prompt\_ms} and \texttt{predicted\_ms}) that enable direct measurement of the prefill-decode split.
Table~\ref{tab:jetson-timing} summarizes the results, varying one factor at a time while holding the others constant.
 
Resolution affects prefill, not decode.
For InternVL3-1B, which uses a fixed visual tokenizer (259 tokens at all resolutions), prefill remains nearly constant at 1.8\,s regardless of input resolution (CV\,=\,8.9\%), and decode dominates at 87\% to 91\% of total phase time.
For Qwen2.5-VL-3B, which scales visual tokens with resolution, the picture is different: prefill grows from 1.6\,s at 224$\times$224 to 18.7\,s at 896$\times$896, while decode time stays comparable across resolutions.
This produces a \emph{resolution trap}: at 896$\times$896, prefill and decode reach approximate parity (55\% vs.\ 45\%), even though the model ingests $14\times$ more visual tokens but generates approximately the same number of output tokens.
The additional prefill computation is largely wasted from the perspective of response quality.
 
Image content affects decode, not prefill.
Within a fixed resolution, different images produce dramatically different decode times while prefill stays constant.
For InternVL3-1B, prefill is stable at 1.8 to 1.9\,s across all content groups, while decode ranges from 13.4\,s for simple images (3 to 5 objects) to 19.3\,s for complex scenes (11+ objects).
This variation arises entirely because more complex images elicit longer model responses (295 tokens for simple vs.\ 413 for complex), not because of any change in visual processing cost.
The same pattern holds for Qwen2.5-VL-3B: prefill is constant across content groups (10.2 to 11.0\,s when pooled over resolutions), while decode varies from 15.0\,s to 23.3\,s.
The Pearson correlation between output token count and decode time is $r = 0.985$ for InternVL3-1B and $r = 0.980$ for Qwen2.5-VL-3B.
 
Prompt type controls output length directly.
The most striking demonstration comes from switching the prompt.
When we replace ``Describe this image'' (mean 398 output tokens) with a short-answer prompt ``What is the main object? Answer in one word'' (mean 3 output tokens), decode time for InternVL3-1B collapses from 18.0\,s to 0.07\,s, a $250\times$ reduction, while prefill remains unchanged at 1.8\,s.
For Qwen2.5-VL-3B the effect is equally dramatic: decode drops from 16.6\,s to 0.2\,s while prefill stays at 6.3\,s vs.\ 5.7\,s.

These three observations converge on a single conclusion: prefill time is determined by architectural choices (visual token strategy and input resolution), while decode time is determined by output length.
Because $\alpha_d \gg \alpha_p$, decode dominates wall time whenever the model generates a non-trivial response, making output token count the primary determinant of inference duration.
 
\subsection{Output Length Is the Dominant Time Driver}
\label{sec:time:output}

The preceding analysis establishes that decode dominates inference time and that decode time scales linearly with $N_{\text{out}}$.
It follows that inference time is primarily determined by the number of output tokens generated.
For InternVL3-1B on the Jetson, the Pearson correlation between output token count and wall-clock time across all 116 Describe inferences is $r = 0.981$ ($p < 10^{-20}$), and between output token count and decode time specifically is $r = 0.985$.

This finding connects directly to energy through the constant-power result of Section~\ref{sec:power}.
Since $E = \bar{P} \times t$ and $\bar{P}$ is approximately constant, we can write:
\begin{equation}
\label{eq:energy-nout}
E \;\approx\; \bar{P} \cdot (\alpha_d \cdot N_{\text{out}} + c),
\end{equation}
where $c$ absorbs prefill time and fixed overhead.
This equation establishes output token count as the first-order predictor of per-inference energy, and reveals that \emph{reducing energy is equivalent to reducing decode time, which is equivalent to controlling output length}.
The question then becomes: what determines $N_{\text{out}}$, and what are the most effective strategies for controlling inference energy?
We address these questions in the next section.

\section{From Time to Energy}
\label{sec:energy}

Equation~\ref{eq:energy-nout} established that inference energy is a linear function of output token count, with prefill and overhead absorbed into a constant.
In this section, we develop the practical implications of this result: we quantify the energy split between prefill and decode in joules, demonstrate that visual token pruning is far less effective than output length control for reducing energy, and validate the decomposition with a simple predictive model.

\subsection{Energy Decomposition: Prefill vs.\ Decode}
\label{sec:energy:decomp}

Since power is approximately constant throughout inference (Section~\ref{sec:power}), total energy decomposes naturally into prefill and decode contributions:
\begin{equation}
\label{eq:energy-decomp}
E \;=\; \underbrace{\bar{P} \cdot t_{\text{prefill}}}_{E_{\text{prefill}}} \;+\; \underbrace{\bar{P} \cdot t_{\text{decode}}}_{E_{\text{decode}}}.
\end{equation}
Building on the latency model from Section~\ref{sec:time}, we define the \emph{Output Length Sensitivity} (OLS) as the marginal energy cost of generating one additional output token:
\begin{equation}
\label{eq:ols}
\text{OLS} \;=\; \bar{P} \cdot \alpha_d \qquad \text{(J/token)},
\end{equation}
which combines the power fingerprint from Section~\ref{sec:power} with the per-token decode cost from Section~\ref{sec:time}.

Table~\ref{tab:energy-breakdown} reports the energy split for two representative models on the Jetson Orin NX.
For InternVL3-1B (fixed visual tokens), decode accounts for 87\% to 91\% of total energy across all resolutions, with OLS\,=\,0.68\,J/token.
For Qwen2.5-VL-3B (dynamic visual tokens), the prefill energy fraction grows from 6\% at 224$\times$224 to 54\% at 896$\times$896, reflecting the increasing cost of processing more visual tokens.
Even so, at the commonly used 448$\times$448 resolution, decode still accounts for 72\% of total energy.

\begin{table}[t]
\centering
\caption{Energy decomposition between prefill and decode phases.}
\label{tab:energy-breakdown}
\small
\begin{tabular}{llcccc}
\toprule
\textbf{Model} & \textbf{Resolution} & $E_{\text{tot}}$ & $E_{\text{pf}}$ & $E_{\text{dc}}$ & \textbf{Dc\%} \\
\midrule
InternVL3-1B & 224$\times$224 & 192 & 25 & 166 & 87 \\
             & 448$\times$448 & 291 & 27 & 264 & 91 \\
             & 672$\times$672 & 263 & 26 & 236 & 90 \\
             & 896$\times$896 & 274 & 28 & 245 & 90 \\
\midrule
Qwen2.5-VL-3B & 224$\times$224 & 358 & 23  & 338 & 94 \\
               & 448$\times$448 & 324 & 89  & 234 & 72 \\
               & 672$\times$672 & 475 & 223 & 249 & 53 \\
               & 896$\times$896 & 492 & 265 & 223 & 46 \\
\bottomrule
\end{tabular}
\end{table}

\subsection{The Visual Token Pruning Illusion}
\label{sec:energy:pruning}

A growing body of work targets visual token reduction as the primary strategy for efficient VLM inference~\cite{chen2024fastv, yang2025visionzip, xing2025pyramiddrop, shang2025prumerge}.
These methods prune or merge visual tokens to reduce prefill computation, reporting savings in FLOPs or latency.
However, our energy decomposition reveals that this strategy has a fundamental ceiling: \emph{visual token pruning can only reduce prefill energy, which is already a minority of total energy}.

We can bound the maximum energy savings from any visual token pruning method without running it.
Let $\eta$ denote the fraction of visual tokens removed.
The prefill energy reduction is at most $\eta \cdot E_{\text{prefill}}$, since not all prefill computation involves visual tokens (text tokens and attention overhead remain).
The total energy savings under fixed decoding behavior are therefore bounded by:
\begin{equation}
\label{eq:pruning-bound}
\frac{\Delta E}{E} \;\leq\; \eta \cdot \frac{E_{\text{prefill}}}{E_{\text{total}}}.
\end{equation}

Table~\ref{tab:pruning-comparison} applies this bound across four models spanning 1B to 8B parameters and compares it to the energy savings achievable through output length control.
For InternVL3-1B, even removing \emph{all} visual tokens ($\eta = 1$) saves at most 10\% of total energy, while reducing output length by 50\% saves approximately 45\%.
The gap widens at larger scale: for Qwen2.5-VL-7B and InternVL3-8B, complete visual token removal saves only 3--4\%, while a short-answer prompt reduces energy by 96--97\%.
This pattern holds because larger models have higher per-token decode cost $\alpha_d$, which further concentrates energy in the decode phase and diminishes the relative weight of prefill.
Output length control is consistently 2 to 24$\times$ more effective than visual token pruning for reducing energy.

\begin{table}[t]
\centering
\caption{Energy reduction from visual token pruning and output length control.}
\label{tab:pruning-comparison}

\scriptsize
\setlength{\tabcolsep}{2pt}

\begin{tabular}{lcccc}
\toprule
\textbf{Strategy} &
\textbf{InternVL3-1B} & 
\textbf{Qwen2.5-VL-3B} & 
\textbf{Qwen2.5-VL-7B} & 
\textbf{InternVL3-8B} \\
\midrule
Remove 50\% visual tokens
& $\leq$5\%$\downarrow$
& $\leq$14\%$\downarrow$
& $\leq$2\%$\downarrow$
& $\leq$2\%$\downarrow$ \\

Remove 100\% visual tokens
& $\leq$10\%$\downarrow$
& $\leq$28\%$\downarrow$
& $\leq$4\%$\downarrow$
& $\leq$3\%$\downarrow$ \\

\midrule

\texttt{max\_tokens} 256$\rightarrow$128
& $\approx$45\%$\downarrow$
& $\approx$36\%$\downarrow$
& $\approx$48\%$\downarrow$
& $\approx$48\%$\downarrow$ \\

Short-answer prompt (avg. 3 tok)
& 90\%$\downarrow$
& 73\%$\downarrow$
& 96\%$\downarrow$
& 97\%$\downarrow$ \\

\bottomrule
\end{tabular}
\end{table}

The prompt ablation provides direct experimental evidence.
Switching from ``Describe this image'' to a short-answer prompt reduces output from 398 to 3 tokens for InternVL3-1B, cutting energy from 291\,J to 28\,J (90\% reduction) while leaving prefill energy unchanged.
This confirms that \emph{controlling the output side of inference is far more effective than optimizing the input side}.


\subsection{Energy Prediction}
\label{sec:energy:prediction}

\begin{figure}[t!]
    \centering
    \small
    \begin{minipage}[b]{0.48\linewidth}
        \centering
        \includegraphics[width=\linewidth]{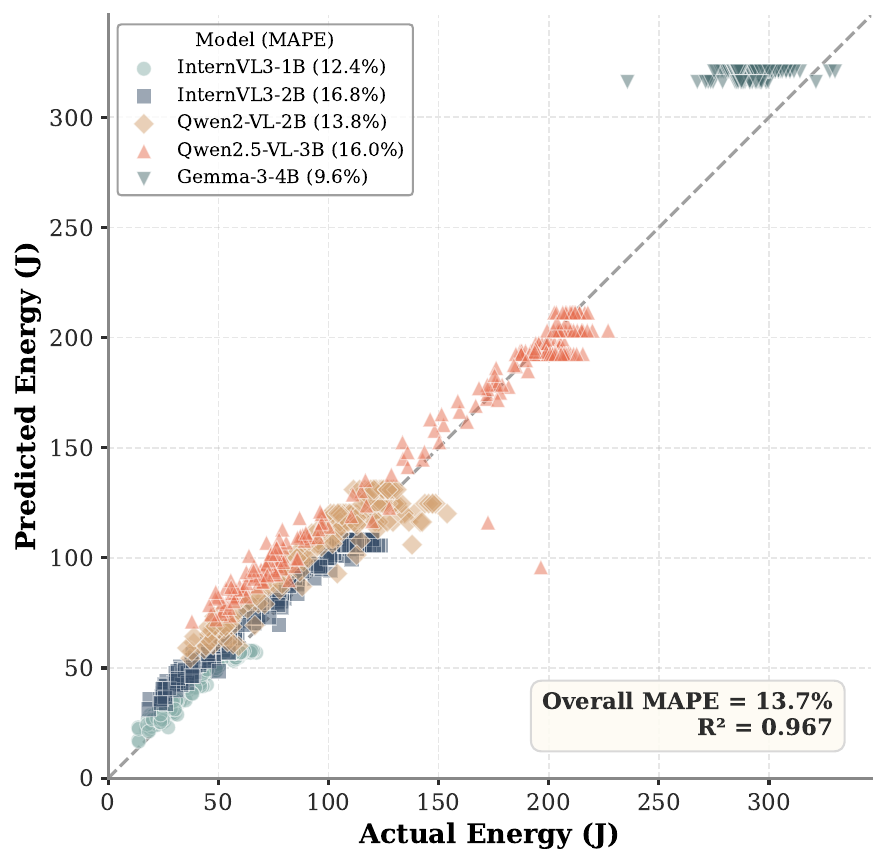}
        \centerline{(a) Per-model predictor}
    \end{minipage}
    \hfill
    \begin{minipage}[b]{0.48\linewidth}
        \centering
        \includegraphics[width=\linewidth]{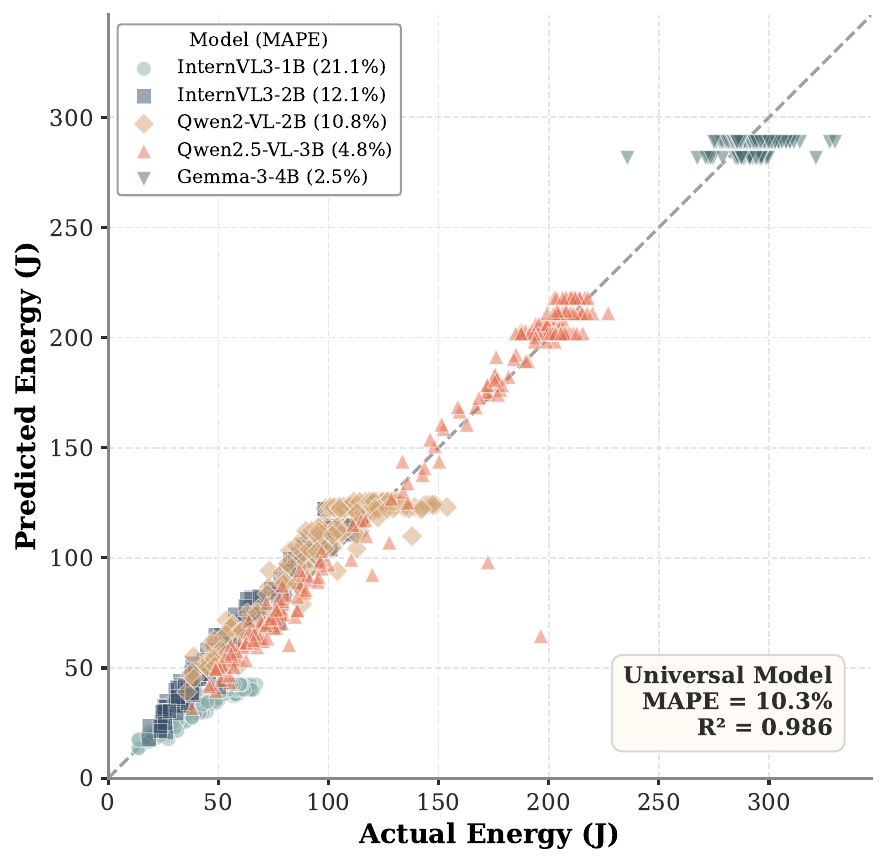}
        \centerline{(b) Universal predictor}
    \end{minipage}
    \caption{Predicted versus measured energy using per-model and universal predictors.}
    \label{fig:energy_pred}
\end{figure}

The decomposition $E = \bar{P} \cdot (\alpha_p N_{\text{in}} + \alpha_d N_{\text{out}} + \beta)$ implies that energy can be predicted from a small number of architectural parameters. We validate this with two linear predictors of increasing generality.

\textit{Per-model predictor.}
For each model, we fit:
\begin{equation}
\label{eq:permodel}
\hat{E} = \bar{P} \cdot (\alpha_p \cdot N_{\text{in}} + \alpha_d \cdot N_{\text{out}} + \beta),
\end{equation}
where $\bar{P}$, $\alpha_p$, $\alpha_d$, and $\beta$ are fitted from a calibration set.
Given a new input, the predictor requires only $N_{\text{in}}$ (deterministic from architecture and resolution) and $N_{\text{out}}$ (bounded by \texttt{max\_tokens}).
Evaluated across all 1{,}680 RTX~3070 runs, the overall MAPE is 13.7\% with $R^2 = 0.967$ (Figure~\ref{fig:energy_pred}a).

\textit{Universal predictor.}
A more ambitious goal is predicting energy for \emph{any} model without per-model calibration.
We fit a single linear model across all five models using interaction features:
\begin{equation}
\label{eq:universal}
\hat{E} = w_0 + w_1 \cdot S + w_2 \cdot N_{\text{in}} + w_3 \cdot N_{\text{out}} + w_4 \cdot S \cdot N_{\text{in}} + w_5 \cdot S \cdot N_{\text{out}},
\end{equation}
where $S$ is the model size in billions of parameters.
The interaction terms $S \cdot N_{\text{in}}$ and $S \cdot N_{\text{out}}$ capture the insight from Section~\ref{sec:time} that larger models incur higher per-token costs.
This predictor achieves MAPE\,=\,10.3\% and $R^2$\,=\,0.986 across all 1{,}680 runs \emph{without any per-model fitting} (Figure~\ref{fig:energy_pred}b).
The dominant coefficient is $w_5$ ($S \times N_{\text{out}}$\,=\,0.426), confirming that the interaction between model size and output length is the primary energy driver.

Leave-one-resolution-out cross-validation shows stable accuracy (MAPE 9.7\% to 12.0\%), indicating that the predictor generalizes to unseen resolution configurations (Table~\ref{tab:loocv}).
The success of such a simple model validates our decomposition framework: VLM inference energy is low-dimensional, fully determined by model size, input token count, and output token count.

\begin{table}[t!]
\centering
\caption{Leave-one-resolution-out evaluation of the universal energy predictor.}
\label{tab:loocv}
\small
\begin{tabular}{lc}
\toprule
\textbf{Held-out Resolution} & \textbf{MAPE (\%)} \\
\midrule
224$\times$224 & 12.0 \\
448$\times$448 & 11.4 \\
672$\times$672 & 9.7 \\
896$\times$896 & 10.3 \\
\midrule
Average & 10.9 \\
\bottomrule
\end{tabular}
\end{table}

\subsection{Deployment Guidelines}
\label{sec:energy:guidelines}

The energy analysis yields three actionable guidelines for energy-constrained VLM deployment.
(i) Budget output, not input.
Since each output token costs 11 to 39$\times$ more than each input token (Table~\ref{tab:alpha-beta}), setting a lower \texttt{max\_tokens} is the single most effective energy control.
Reducing maximum output from 256 to 128 tokens saves approximately 40 to 50\% energy across all models, while resolution reduction from 896$\times$896 to 224$\times$224 saves at most 27\% and only for dynamic-token models.
(ii) Match token strategy to deployment scenario.
If the application requires high-resolution inputs (e.g., fine-grained visual inspection), fixed-token architectures (InternVL3, Gemma-3) avoid the resolution-dependent energy penalty of dynamic-token models.
If resolution is fixed and low, dynamic-token models (Qwen2-VL, Qwen2.5-VL) can be competitive due to their fewer visual tokens at low resolution.
(iii) Anticipate content-driven energy variance.
Image content causes up to 4.1$\times$ energy variation through output length (104 to 428\,J for InternVL3-1B at 448$\times$448).
For battery budgeting in embodied agents or mobile applications, worst-case energy estimation should use \texttt{max\_tokens} rather than average output length.
Our universal predictor (Equation~\ref{eq:universal}) enables estimation without per-configuration profiling.

\section{Related Work}
Our work bridges three areas: edge LLM energy profiling, VLM token efficiency, and efficient edge VLM architectures. Prior work has not characterized the energy consumption of multimodal VLM inference on edge hardware.

\subsection{Energy Profiling for Edge LLM Inference}

A growing body of work characterizes the energy consumption of language model inference on resource-constrained hardware~\cite{zhan2026prism}.
Husom et al.~\cite{husom2025sustainable} benchmark 28 quantized SLMs on Raspberry Pi using a hardware power meter at 2\,MHz sampling, reporting up to 69\% energy reduction from quantization and a correlation of $r{=}0.85$ between response length and total energy.
Camel~\cite{xu2025camel} formulates GPU frequency and batch size selection as a Thompson Sampling bandit on Jetson AGX Orin, achieving 12\% to 30\% energy-delay product reduction for text-only LLMs.
Arya and Simmhan~\cite{arya2025understanding} profile LLMs across eight power modes on the same platform, revealing that quantization can paradoxically increase latency for smaller models.
At the component level, CLEAR~\cite{jain2025clear} addresses the temporal mismatch between microsecond execution and millisecond sensor resolution, finding that attention blocks consume significantly more energy per FLOP than feed-forward layers.
Recent tools such as EdgeProfiler~\cite{pinnock2025edgeprofiler} and Edge-First LM~\cite{jang2025edgefirst} provide analytical models and benchmarks for Qwen~2.5 variants across multiple Jetson platforms, while Abstreiter et al.~\cite{abstreiter2025painful} show that edge inference can be up to 375$\times$ cheaper per token than cloud-based alternatives.
Despite this progress, all of these works study text-only models exclusively.
None extends to multimodal VLM inference, leaving the energy impact of vision encoding, visual token processing, and the interaction between input modality and output length entirely uncharacterized.

\subsection{Visual Token Efficiency in VLMs}

Significant progress has been made in reducing the computational cost of visual tokens in VLMs.
FastV~\cite{chen2024fastv} prunes 50\% of visual tokens after the first two LLM layers with negligible accuracy loss.
LLaVA-PruMerge~\cite{shang2025prumerge} exploits attention sparsity to achieve 14 to 18$\times$ token compression, and PyramidDrop~\cite{xing2025pyramiddrop} leverages visual redundancy in deeper layers to reduce FLOPs by 55\%.
Other approaches, including HiRED~\cite{arif2025hired}, VisionZip~\cite{yang2025visionzip}, SparseVLM~\cite{zhang2025sparsevlm}, and DivPrune~\cite{alvar2025divprune}, introduce attention-guided or text-aware sparsification strategies.
A comprehensive survey by Shao et al.~\cite{shao2025survey} provides an overview of this rapidly growing area.

However, these works evaluate efficiency exclusively through FLOPs, latency, or memory; none reports actual energy consumption in Joules or Watts.
Whether visual token reduction translates to proportional energy savings on edge hardware remains an open question.
As we show in Section~\ref{sec:energy:pruning}, the answer depends critically on the energy split between prefill and decode, which cannot be inferred from FLOPs alone.

\subsection{Efficient Edge VLMs}
Beyond token-level optimizations, researchers have also developed compact VLM architectures specifically for on-device inference.
MobileVLM~\cite{chu2024mobilevlm} reduces visual tokens to 144 and achieves 21.5 tokens/sec on mobile SoCs.
FastVLM~\cite{vasu2025fastvlm} employs a hybrid CNN-ViT encoder for faster time-to-first-token, while SmolVLM~\cite{marafioti2025smolvlm} pushes model size down to 256M parameters, requiring less than 1\,GB of GPU memory.
Other notable efforts include MiniCPM-V~\cite{yao2024minicpmv}, TinyLLaVA~\cite{zhou2024tinyllava}, AndesVL~\cite{jin2025andesvl}, and HyperVL~\cite{liu2025hypervl}, all targeting mobile or NPU platforms.

None of these works provides empirical energy measurements, even when battery life is cited as a primary motivation.
Our work fills this gap through the first systematic energy profiling of VLM inference on GPU-equipped edge hardware.
We show that the dominant energy cost lies not in the visual processing these architectures optimize, but in the autoregressive output generation they overlook.
Two concurrent works address related problems: Alvarez et al.~\cite{alvarez2025scaling} study energy scaling laws for CPU-only LLM inference, and Chung et al.~\cite{chung2026joules} diagnose inference energy at datacenter scale.
Neither constitutes a systematic VLM energy profiling on edge GPUs.

\section{Conclusion}
\label{sec:conclusion}

This work presents the first systematic energy profiling of VLM inference on edge hardware, spanning five models across three architecture families, four input resolutions, and two hardware platforms (NVIDIA RTX 3070 and Jetson Orin NX).
Our analysis reveals a simple but powerful decomposition: power is a model-intrinsic constant with less than 5\% variation across all conditions, inference time is dominated by autoregressive decoding that accounts for 86 to 97\% of total energy, and energy follows directly as their product.
Each output token costs 11 to 39$\times$ more wall-clock time than each input token, and image complexity induces up to 4.1$\times$ energy variation at identical resolution through its effect on output length.
The dominant energy cost in VLM inference is not \emph{seeing} but \emph{speaking}: even removing all visual tokens saves at most 10\% of total energy for fixed-token models, while controlling output length saves up to 97\%.
Validation on 7B and 8B models confirms that this pattern strengthens at larger scale.

Our profiling focuses on 1B--4B parameter models with batch size one, representing a common single-agent edge deployment setting. Whether decode dominance extends to batched or concurrent serving remains an open question.
Looking ahead, our findings suggest shifting VLM efficiency efforts from input-side optimization toward output-side optimization. Existing visual token pruning methods primarily reduce prefill cost, while techniques targeting decode efficiency, such as speculative decoding and output length prediction, directly address the dominant energy phase. Our decomposition also enables energy-aware inference scheduling, such as adapting output budgets to battery constraints. A promising direction is predicting energy-expensive inputs before inference by linking output length with image structural information~\cite{finzi2025epiplexity}.


\begin{acks}
This research was supported by the Youth S\&T Talent Support Programme of Guangdong Provincial Association for Science and Technology (Grant No. SKXRC2025325). We also thank Prof. Giuliano Casale for supporting our ACM SIGMM Student Travel Grant application.
\end{acks}

\bibliographystyle{ACM-Reference-Format}
\bibliography{reference} 

@String{Computing = "Computing" }

@String{Computer = "{IEEE} Computer" }

@String{Chelsea = "Chelsea" }

@inproceedings{brohan2023rt2,
  title     = {{RT-2}: Vision-Language-Action Models Transfer Web Knowledge to Robotic Control},
  author    = {Brohan, Anthony and Brown, Noah and Carbajal, Justice and Chebotar, Yevgen and Chen, Xi and Choromanski, Krzysztof and Ding, Tianli and Driess, Danny and others},
  booktitle = {Proceedings of the 7th Conference on Robot Learning (CoRL)},
  year      = {2023},
}

@inproceedings{driess2023palme,
  title     = {{PaLM-E}: An Embodied Multimodal Language Model},
  author    = {Driess, Danny and Xia, Fei and Sajjadi, Mehdi S. M. and Lynch, Corey and Chowdhery, Aakanksha and Ichter, Brian and Wahid, Ayzaan and Tompson, Jonathan and Vuong, Quan and Yu, Tianhe and others},
  booktitle = {Proceedings of the 40th International Conference on Machine Learning (ICML)},
  year      = {2023},
}

@inproceedings{kim2024openvla,
  title     = {{OpenVLA}: An Open-Source Vision-Language-Action Model},
  author    = {Kim, Moo Jin and Pertsch, Karl and Karamcheti, Siddharth and Xiao, Ted and Balakrishna, Ashwin and Nair, Suraj and Rafailov, Rafael and Foster, Ethan and Lam, Grace and Sanketi, Pannag and others},
  booktitle = {Proceedings of the 8th Conference on Robot Learning (CoRL)},
  year      = {2024},
}

@inproceedings{black2024pi0,
  title     = {$\pi_0$: A Vision-Language-Action Flow Model for General Robot Control},
  author    = {Black, Kevin and Brown, Noah and Driess, Danny and Esmail, Adnan and Equi, Michael and Finn, Chelsea and Fusai, Niccolo and Groom, Lachy and Hausman, Karol and Ichter, Brian and others},
  booktitle = {Proceedings of Robotics: Science and Systems (RSS)},
  year      = {2025},
}

@article{ma2024vlasurvey,
  title   = {A Survey on Vision-Language-Action Models for Embodied {AI}},
  author  = {Ma, Yueen and Song, Zixing and Zhuang, Yuzheng and Hao, Jianye and King, Irwin},
  journal = {arXiv preprint arXiv:2405.14093},
  year    = {2024},
}

@article{chu2024mobilevlm,
  title   = {{MobileVLM V2}: Faster and Stronger Baseline for Vision Language Model},
  author  = {Chu, Xiangxiang and Qiao, Limeng and Zhang, Xinyu and Xu, Shuang and Wei, Fei and others},
  journal = {arXiv preprint arXiv:2402.03766},
  year    = {2024},
}

@inproceedings{vasu2025fastvlm,
  title     = {{FastVLM}: Efficient Vision Encoding for Vision Language Models},
  author    = {Vasu, Pavan Kumar Anasosalu and Faghri, Fartash and Li, Chun-Liang and Koc, Cem and True, Nate and Antony, Albert and Santhanam, Gokul and Gabriel, James and Grasch, Peter and Tuzel, Oncel and Pouransari, Hadi},
  booktitle = {Proceedings of the IEEE/CVF Conference on Computer Vision and Pattern Recognition (CVPR)},
  year      = {2025},
}

@article{marafioti2025smolvlm,
  title   = {{SmolVLM}: Redefining Small and Efficient Multimodal Models},
  author  = {Marafioti, Andr\'{e}s and Zohar, Orr and Farré, Miquel and Noyan, Merve and Bakouch, Elie and others},
  journal = {arXiv preprint arXiv:2504.05299},
  year    = {2025},
}

@inproceedings{zhan2026prism,
  title={Prism: Privacy-aware routing for adaptive cloud--edge llm inference via semantic sketch collaboration},
  author={Zhan, Junfei and Shen, Haoxun and Lin, Zheng and He, Tengjiao},
  booktitle={Proceedings of the AAAI Conference on Artificial Intelligence},
  volume={40},
  number={33},
  pages={28150--28158},
  year={2026}
}

@article{xu2025camel,
  title   = {{Camel}: Energy-Aware {LLM} Inference on Resource-Constrained Devices},
  author  = {Xu, Hao and Peng, Long and Song, Shezheng and Liu, Xiaodong and Jun, Ma and Li, Shasha and Yu, Jie and Mao, Xiaoguang},
  journal = {arXiv preprint arXiv:2508.09173},
  year    = {2025},
}

@article{husom2025sustainable,
  title   = {Sustainable {LLM} Inference for Edge {AI}: Evaluating Quantized {LLMs} for Energy Efficiency, Output Accuracy, and Inference Latency},
  author  = {Husom, Erik Johannes and Goknil, Arda and Astekin, Mustafa and Shar, Lwin Khin and others},
  journal = {ACM Transactions on Internet of Things},
  volume  = {6},
  number  = {4},
  year    = {2025},
}

@inproceedings{arya2025understanding,
  title     = {Understanding the Performance and Power of {LLM} Inferencing on Edge Accelerators},
  author    = {Arya, Mayank and Simmhan, Yogesh},
  booktitle = {7th Workshop on Parallel AI and Systems for the Edge (PAISE), co-located with IEEE IPDPS},
  year      = {2025},
}

@article{jain2025clear,
  title   = {Dissecting Transformers: A {CLEAR} Perspective towards Green {AI}},
  author  = {Jain, Hemang and Goyal, Shailender and Pandey, Divyansh and Vaidhyanathan, Karthik},
  journal = {arXiv preprint arXiv:2510.02810},
  year    = {2025},
}

@inproceedings{chen2024fastv,
  title={An Image is Worth 1/2 Tokens After Layer 2: Plug-and-Play Inference Acceleration for Large Vision-Language Models},
  author={Chen, Liang and Zhao, Haozhe and Liu, Tianyu and Bai, Shuai and Lin, Junyang and Zhou, Chang and Chang, Baobao},
  booktitle={Proceedings of the European Conference on Computer Vision (ECCV)},
  pages={19--35},
  year={2024},
}

@inproceedings{xing2025pyramiddrop,
  title     = {{PyramidDrop}: Accelerating Your Large Vision-Language Models via Pyramid Visual Redundancy Reduction},
  author    = {Xing, Long and Huang, Qidong and Dong, Xiaoyi and Lu, Jiajie and Zhang, Pan and others},
  booktitle = {Proceedings of the IEEE/CVF Conference on Computer Vision and Pattern Recognition (CVPR)},
  year      = {2025},
}

@inproceedings{yang2025visionzip,
  title     = {{VisionZip}: Longer is Better but Not Necessary in Vision Language Models},
  author    = {Yang, Senqiao and Chen, Yukang and Tian, Zhuotao and others},
  booktitle = {Proceedings of the IEEE/CVF Conference on Computer Vision and Pattern Recognition (CVPR)},
  year      = {2025},
}

@article{wang2024qwen2vl,
  title   = {{Qwen2-VL}: Enhancing Vision-Language Model's Perception of the World at Any Resolution},
  author  = {Wang, Peng and Bai, Shuai and Tan, Sinan and Wang, Shijie and Fan, Zhihao and Bai, Jinze and Chen, Keqin and Liu, Xuejing and Wang, Jialin and Ge, Wenbin and others},
  journal = {arXiv preprint arXiv:2409.12191},
  year    = {2024},
}

@article{bai2025qwen25vl,
  title   = {{Qwen2.5-VL} Technical Report},
  author  = {Bai, Shuai and Chen, Keqin and Liu, Xuejing and Wang, Jialin and Ge, Wenbin and Song, Sibo and Dang, Kai and Wang, Peng and Wang, Shijie and Tang, Jun and others},
  journal = {arXiv preprint arXiv:2502.13923},
  year    = {2025},
}

@article{gemmateam2025gemma3,
  title   = {Gemma 3 Technical Report},
  author  = {{Gemma Team} and Kamath, Aishwarya and Ferret, Johan and Pathak, Shreya and Vieillard, Nino and Merhej, Ramona and others},
  journal = {arXiv preprint arXiv:2503.19786},
  year    = {2025},
}

@article{zhu2025internvl3,
  title   = {{InternVL3}: Exploring Advanced Training and Test-Time Recipes for Open-Source Multimodal Models},
  author={Zhu, Jinguo and Wang, Weiyun and Chen, Zhe and Liu, Zhaoyang and Ye, Shenglong and Gu, Lixin and Tian, Hao and Duan, Yuchen and Su, Weijie and Shao, Jie and others},
  journal = {arXiv preprint arXiv:2504.10479},
  year    = {2025},
}

@article{pinnock2025edgeprofiler,
  title   = {{EdgeProfiler}: A Fast Profiling Framework for Lightweight {LLMs} on Edge Using Analytical Model},
  author  = {Pinnock, Alyssa and Jayakody, Shakya and Roxy, Kawsher A. and Ahmed, Md Rubel},
  journal = {arXiv preprint arXiv:2506.09061},
  year    = {2025},
}

@inproceedings{jang2025edgefirst,
  title     = {Edge-First Language Model Inference: Models, Metrics, and Tradeoffs},
  author    = {Jang, SiYoung and Morabito, Roberto},
  booktitle = {Proceedings of the 45th IEEE International Conference on Distributed Computing Systems (ICDCS)},
  year      = {2025},
}

@article{abstreiter2025painful,
  title   = {Sometimes Painful but Certainly Promising: Feasibility and Trade-offs of Language Model Inference at the Edge},
  author  = {Abstreiter, Maximilian and Tarkoma, Sasu and Morabito, Roberto},
  journal = {ACM Transactions on Embedded Computing Systems},
  year    = {2025},
  doi     = {10.1145/3788870},
}

@inproceedings{shang2025prumerge,
  title     = {{LLaVA-PruMerge}: Adaptive Token Reduction for Efficient Large Multimodal Models},
  author    = {Shang, Yuzhang and Cai, Mu and Xu, Bingxin and Lee, Yong Jae and Yan, Yan},
  booktitle = {Proceedings of the IEEE/CVF International Conference on Computer Vision (ICCV)},
  pages     = {22857--22867},
  year      = {2025},
}

@inproceedings{arif2025hired,
  title     = {{HiRED}: Attention-Guided Token Dropping for Efficient Inference of High-Resolution Vision-Language Models},
  author    = {Arif, Kazi Hasan Ibn and Yoon, JinYi and Nikolopoulos, Dimitrios S. and Vandierendonck, Hans and John, Deepu and Ji, Bo},
  booktitle = {Proceedings of the AAAI Conference on Artificial Intelligence},
  volume    = {39},
  number    = {2},
  pages     = {1773--1781},
  year      = {2025},
}

@inproceedings{zhang2025sparsevlm,
  title     = {{SparseVLM}: Visual Token Sparsification for Efficient Vision-Language Model Inference},
  author={Zhang, Yuan and Fan, Chun-Kai and Ma, Junpeng and Zheng, Wenzhao and Huang, Tao and Cheng, Kuan and Gudovskiy, Denis A and Okuno, Tomoyuki and Nakata, Yohei and Keutzer, Kurt and others},
  booktitle = {Proceedings of the 42nd International Conference on Machine Learning (ICML)},
  year      = {2025},
}

@inproceedings{alvar2025divprune,
  title     = {{DivPrune}: Diversity-based Visual Token Pruning for Large Multimodal Models},
  author    = {Alvar, Saeed Ranjbar and Singh, Gursimran and Akbari, Mohammad and Zhang, Yong},
  booktitle = {Proceedings of the IEEE/CVF Conference on Computer Vision and Pattern Recognition (CVPR)},
  pages     = {9392--9401},
  year      = {2025},
}

@article{shao2025survey,
  title   = {A Survey of Token Compression for Efficient Multimodal Large Language Models},
  author  = {Shao, Kele and Tao, Keda and Zhang, Kejia and Feng, Sicheng and Cai, Mu and Shang, Yuzhang and You, Haoxuan and Qin, Can and Sui, Yang and Wang, Huan},
  journal = {arXiv preprint arXiv:2507.20198},
  year    = {2025},
}

@article{yao2024minicpmv,
  title   = {Efficient {GPT-4V} Level Multimodal Large Language Model for Deployment on Edge Devices},
  author  = {Yao, Yuan and Yu, Tianyu and Zhang, Ao and Wang, Chongyi and Cui, Junbo and Zhu, Hongji and Cai, Tianchi and Li, Haoyu and Zhao, Weilin and He, Zhihui and others},
  journal = {Nature Communications},
  volume={16},
  number={1},
  pages={5509},
  year={2025},
  doi     = {10.1038/s41467-025-61040-5},
}

@article{zhou2024tinyllava,
  title   = {{TinyLLaVA}: A Framework of Small-scale Large Multimodal Models},
  author  = {Zhou, Baichuan and Hu, Ying and Weng, Xi and Jia, Junlong and Luo, Jie and Liu, Xien and Wu, Ji and Huang, Lei},
  journal = {arXiv preprint arXiv:2402.14289},
  year    = {2024},
}

@article{jin2025andesvl,
  title   = {{AndesVL} Technical Report: An Efficient Mobile-side Multimodal Large Language Model},
  author  = {Jin, Zhiwei and Song, Xiaohui and Wang, Nan and Liu, Yafei and Li, Chao and others},
  journal = {arXiv preprint arXiv:2510.11496},
  year    = {2025},
}

@article{liu2025hypervl,
  title   = {{HyperVL}: An Efficient and Dynamic Multimodal Large Language Model for Edge Devices},
  author  = {Liu, Yuchen and others},
  journal = {arXiv preprint arXiv:2512.14052},
  year    = {2025},
}

@article{alvarez2025scaling,
  title   = {Scaling Laws for Energy Efficiency of Local {LLMs}},
  author  = {Alvarez, Ander and Genuardi, Alessandro and Sinha, Nilotpal and Tiene, Antonio and Okyay, Mikail and Ryskulov, Bakbergen and Montero, David and Mugel, Samuel and Or\'{u}s, Rom\'{a}n},
  journal = {arXiv preprint arXiv:2512.16531},
  year    = {2025},
}

@article{chung2026joules,
  title   = {Where Do the Joules Go? Diagnosing Inference Energy Consumption},
  author  = {Chung, Jae-Won and Wu, Ruofan and Ma, Jeff J. and Chowdhury, Mosharaf},
  journal = {arXiv preprint arXiv:2601.22076},
  year    = {2026},
}

@inproceedings{dosovitskiy2021vit,
  title     = {An Image is Worth 16x16 Words: Transformers for Image Recognition at Scale},
  author    = {Dosovitskiy, Alexey and Beyer, Lucas and Kolesnikov, Alexander and Weissenborn, Dirk and Zhai, Xiaohua and Unterthiner, Thomas and Dehghani, Mostafa and Minderer, Matthias and Heigold, Georg and Gelly, Sylvain and Uszkoreit, Jakob and Houlsby, Neil},
  booktitle = {Proceedings of the International Conference on Learning Representations (ICLR)},
  year      = {2021},
}

@inproceedings{pope2023efficiently,
  title     = {Efficiently Scaling Transformer Inference},
  author    = {Pope, Reiner and Douglas, Sholto and Chowdhery, Aakanksha and Devlin, Jacob and Bradbury, James and Levskaya, Anselm and Heek, Jonathan and Xiao, Kefan and Agrawal, Shivani and Dean, Jeff},
  booktitle = {Proceedings of Machine Learning and Systems (MLSys)},
  volume    = {5},
  year      = {2023},
}

@article{williams2009roofline,
  title     = {Roofline: An Insightful Visual Performance Model for Multicore Architectures},
  author    = {Williams, Samuel and Waterman, Andrew and Patterson, David A.},
  journal   = {Communications of the ACM},
  volume    = {52},
  number    = {4},
  pages     = {65--76},
  year      = {2009},
  publisher = {ACM},
  doi       = {10.1145/1498765.1498785},
}

@article{finzi2025epiplexity,
  title   = {From Entropy to Epiplexity: Rethinking Information for Computationally Bounded Intelligence},
  author  = {Finzi, Marc and Qiu, Shikai and Jiang, Yiding and Izmailov, Pavel and Kolter, J. Zico and Wilson, Andrew Gordon},
  journal = {arXiv preprint arXiv:2601.03220},
  year    = {2026},
}

@inproceedings{lin2014microsoft,
  title={Microsoft {COCO}: Common Objects in Context},
  author={Lin, Tsung-Yi and Maire, Michael and Belongie, Serge and Bourdev, Lubomir and Girshick, Ross and Hays, James and Perona, Pietro and Ramanan, Deva and Zitnick, C Lawrence and Doll{\'a}r, Piotr},
  booktitle={Proceedings of the European Conference on Computer Vision (ECCV)},
  pages={740--755},
  year={2014},
}

\clearpage
\onecolumn
\appendix

\setcounter{section}{0}
\setcounter{subsection}{0}
\setcounter{figure}{0}
\setcounter{table}{0}
\setcounter{equation}{0}
\setcounter{proposition}{0}
\setcounter{corollary}{0}
\renewcommand{\thesection}{A.\arabic{section}}
\renewcommand{\thesubsection}{A.\arabic{section}.\arabic{subsection}}
\renewcommand{\thetable}{A\arabic{table}}
\renewcommand{\thefigure}{A\arabic{figure}}
\renewcommand{\theequation}{A\arabic{equation}}
\renewcommand{\theHsection}{app.\arabic{section}}
\renewcommand{\theHsubsection}{app.\arabic{section}.\arabic{subsection}}
\renewcommand{\theHtable}{app.\arabic{table}}
\renewcommand{\theHfigure}{app.\arabic{figure}}
\renewcommand{\theHequation}{app.\arabic{equation}}

\begin{tcolorbox}[
  colback=demolinkcolor!6,
  colframe=demolinkcolor,
  boxrule=0.8pt,
  arc=3pt,
  left=10pt, right=10pt, top=8pt, bottom=8pt,
  before skip=8pt, after skip=14pt,
]
\centering
\textbf{\large Project Page with Interactive Energy Calculator}\\[3pt]
A browser-based project page is available at:\\[4pt]
{\large\texttt{\href{https://junfei-z.github.io/vlm/}{https://junfei-z.github.io/vlm/}}}\\[4pt]
\small It walks through the main findings with a guided
narrative, renders the key figures from the main paper, and
includes an \textbf{interactive energy calculator}: readers can
pick a model, drag sliders for input and output token counts,
and see the predicted per-inference energy, latency, and decode
fraction update in real time. The page is fully anonymized and
contains no author or institutional information.
\end{tcolorbox}

\section*{Overview}

This document supplements the main paper with the details
needed to reproduce and scrutinize our measurements, and with
two formal results that promote the main paper's empirical
claims to provable statements.
Appendix~\ref{appx:setup} reports the complete experimental
setup, covering hardware platforms, the power-measurement
chain, software stack, dataset, prompts, and the per-trial
measurement protocol.
Appendix~\ref{appx:robustness} reports robustness checks,
including a resolution sweep under DVFS that confirms the
power fingerprint is not an artifact of clock-locking.
Appendix~\ref{appx:theory} gives the roofline-based mechanistic
explanation of why prefill is compute-bound and decoding is
memory-bound, and proves a formal upper bound on the
per-inference average power.
Appendix~\ref{appx:pruning} explains why prior visual-token
pruning methods report savings that we do not reproduce on
edge devices, and proves a structural upper bound on the
end-to-end energy savings any such method can achieve. The
released artifacts that support every claim above are listed
at the end of Appendix~\ref{appx:pruning}, including a
browser-based project page with an interactive energy
calculator linked at the top of this page.

\section{Experimental Setup}
\label{appx:setup}

This section gathers all configuration details needed to
reproduce the measurements in the main paper, including the
two edge platforms, the power-measurement chain on each, the
software stack and quantization choices, the COCO subset and
prompts, and the per-trial protocol used to compute the
prefill/decode timings and energies.

\subsection{Hardware Platforms and Power-Measurement Chain}
\label{appx:hardware}

We conduct all measurements on two edge platforms that span
the practical deployment spectrum of on-device VLM inference,
from embedded modules to mobile workstations.
Figure~\ref{fig:hardware} shows the physical setup of both
platforms, and Table~\ref{tab:hardware-spec} summarizes their
key specifications. The two platforms differ by roughly an
order of magnitude in both TDP and peak compute, yet share
the same CUDA and \texttt{llama.cpp} software stack, so any
qualitative claim that holds on both platforms is unlikely to
be an artifact of a single device class.

\textbf{Jetson Orin NX 16\,GB.}
The embedded platform is an NVIDIA Jetson Orin NX 16\,GB
module, representative of the compute budget available in
robotics and embodied AI systems. The module is housed in a
third-party aluminum enclosure with a passive heatsink and no
active cooling. We operate the device in its 15\,W power mode
and lock all clocks with \texttt{jetson\_clocks}, matching
the configuration used in the main paper. We continuously
poll \texttt{tegrastats} during every run and verify that no
thermal throttling event is recorded across the full
measurement campaign.

We read module-level power directly from the on-board INA3221
current/voltage monitor, which the NVIDIA L4T kernel exposes
as a sysfs hwmon interface under
\texttt{/sys/bus/i2c/drivers/ina3221/}. The INA3221 is a
three-channel monitor; on this carrier board, we identify the
\texttt{VDD\_IN} channel by scanning the per-channel
\texttt{in*\_label} files at startup, then read voltage in
millivolts from the corresponding \texttt{in*\_input} file
and current in milliamps from the matching
\texttt{curr*\_input} file. Instantaneous power in milliwatts
is computed as $P = V \times I / 1000$. \texttt{VDD\_IN}
reports the total input power to the module, covering CPU,
GPU, memory, and IO, which is the appropriate signal for an
embedded module that has no separable GPU power rail. We
sample \texttt{VDD\_IN} from a background Python thread that
runs in parallel with the inference request: the thread is
launched immediately before the request is dispatched to
\texttt{llama-server}, records (timestamp, power) tuples
throughout the request, and is joined after the response
returns. The sampling interval is fixed at 100\,ms.

\textbf{MECHREVO Titan PLUS Laptop with RTX~3070.}
The high-performance edge platform is a MECHREVO Titan PLUS
17.3-inch gaming laptop\footnote{MECHREVO is a laptop OEM.
The exact model used in this work is the \emph{Titan PLUS} configured
with an Intel Core i7-10875H, 32\,GB DDR4, a 1\,TB Samsung
NVMe SSD, and an NVIDIA RTX~3070 Laptop GPU.} equipped with
an Intel Core i7-10875H CPU (8 cores, 16 threads, 2.30\,GHz
base) and an NVIDIA RTX~3070 Laptop GPU. We lock the GPU
core frequency at 1500\,MHz, matching the configuration used
in the main paper, so that all runs observe a stable compute
ceiling. The laptop operates on AC power throughout.

We read GPU package power through HWiNFO64
Pro\footnote{HWiNFO64 Pro v8.10, \url{https://www.hwinfo.com/}.
We use the Pro edition because the free version does not
support unattended CSV logging at sub-second intervals.}, a
widely used Windows hardware monitor that reads power,
frequency, and temperature directly from the on-board sensors
exposed by the GPU driver and the embedded controller. The
relevant signal for our analysis is the GPU package power
for the RTX~3070 Laptop GPU, which is the same quantity that
\texttt{nvidia-smi} reports as \texttt{power.draw} on desktop
GPUs and which is the closest available analog to ``what the
GPU itself is drawing'' on a laptop. HWiNFO64 Pro is
configured to log this signal to a CSV file at a 100\,ms
sampling interval, the same rate used on the Jetson, together
with timestamps that we later align to the
\texttt{llama-server} request timeline. We deliberately do
not use total system-board power, because that signal would
also include the CPU package, the display, and the storage
subsystem, none of which are part of the inference compute
path under examination.

\textbf{Why two different tools.}
Although a single cross-platform tool such as
\texttt{nvidia-smi} runs on both Windows and Linux, it does
not expose useful power data on Tegra-class devices like the
Jetson Orin NX, which have integrated GPUs and route power
telemetry through the INA3221 instead. Using the
platform-native sensor on each device gives us the most
direct and lowest-latency reading. The two tools write to
identical CSV schemas, so this asymmetry does not propagate
into the downstream analysis pipeline.

\begin{table}[t]
\centering
\caption{Specifications of the two edge platforms used in
this work, including the power-measurement chain on each.}
\label{tab:hardware-spec}
\small
\begin{tabular}{lll}
\toprule
                & Jetson Orin NX 16\,GB       & MECHREVO Titan PLUS \\
\midrule
GPU             & Ampere, 1024 CUDA cores     & RTX~3070 Laptop \\
GPU memory      & shared 16\,GB LPDDR5        & 8\,GB GDDR6 \\
CPU             & 8-core Cortex-A78AE         & i7-10875H, 8C/16T \\
System RAM      & shared with GPU             & 32\,GB DDR4 \\
Storage         & NVMe SSD                    & 1\,TB Samsung NVMe \\
Power mode      & 15\,W, \texttt{jetson\_clocks} & GPU at 1500\,MHz \\
Cooling         & passive heatsink            & active, dual fan \\
OS              & Ubuntu 22.04, JetPack 6.0   & Windows 11 23H2 \\
Power reader    & sysfs INA3221, $V \times I$ & HWiNFO64 Pro v8.10 \\
Power signal    & \texttt{VDD\_IN} (module)   & GPU package power \\
Sampling rate   & 100\,ms                     & 100\,ms \\
\bottomrule
\end{tabular}
\end{table}

\begin{figure}[!t]
\centering
\begin{minipage}[t]{0.46\linewidth}
  \centering
  \includegraphics[width=\linewidth,height=0.55\linewidth,keepaspectratio]{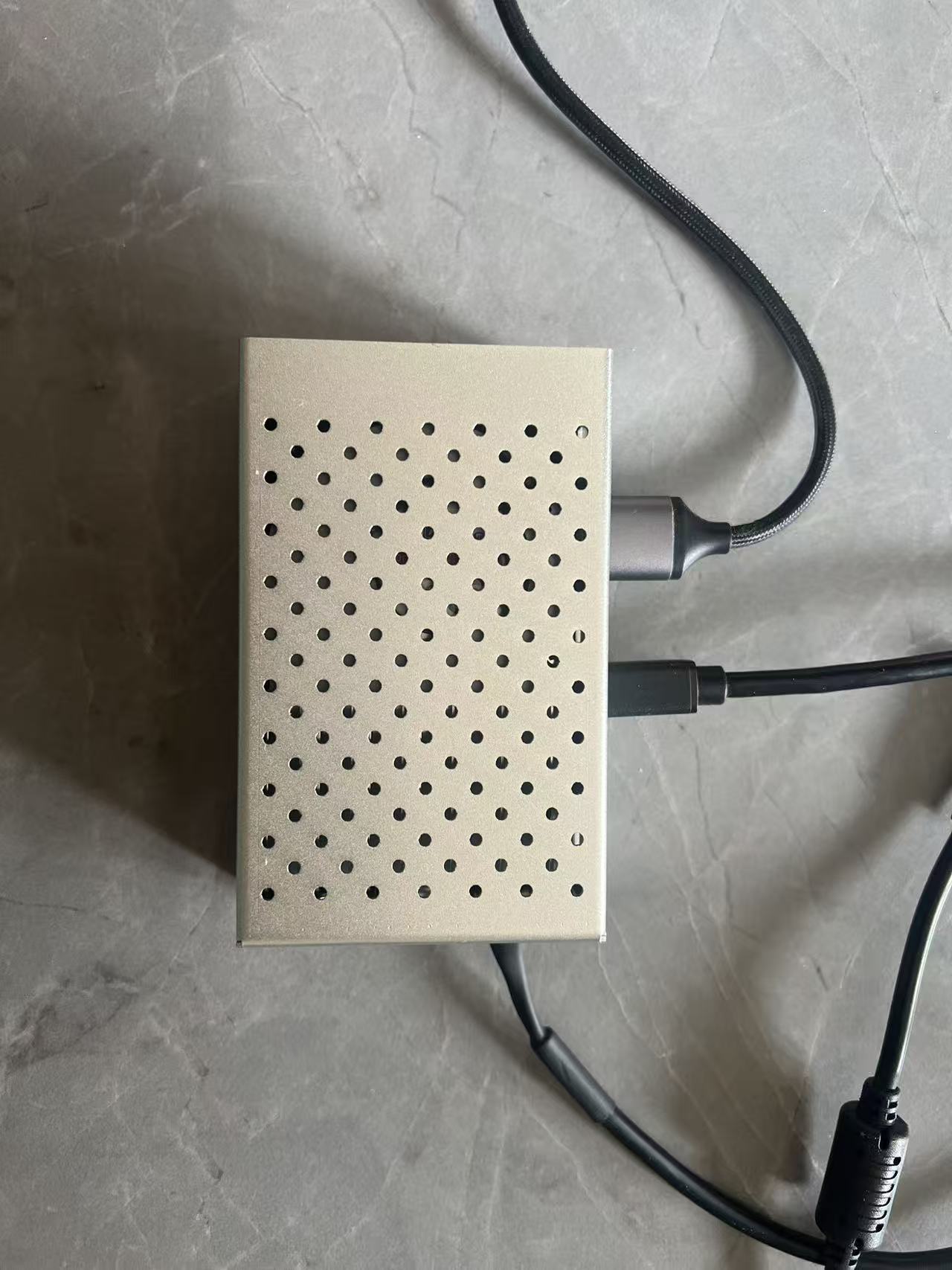}
  \vspace{-1mm}
  \caption*{\small (a) Jetson Orin NX 16\,GB in a passive aluminum enclosure, powered through the USB-C rail monitored by the on-board INA3221 sensor.}
\end{minipage}
\hfill
\begin{minipage}[t]{0.46\linewidth}
  \centering
  \includegraphics[width=\linewidth,height=0.55\linewidth,keepaspectratio]{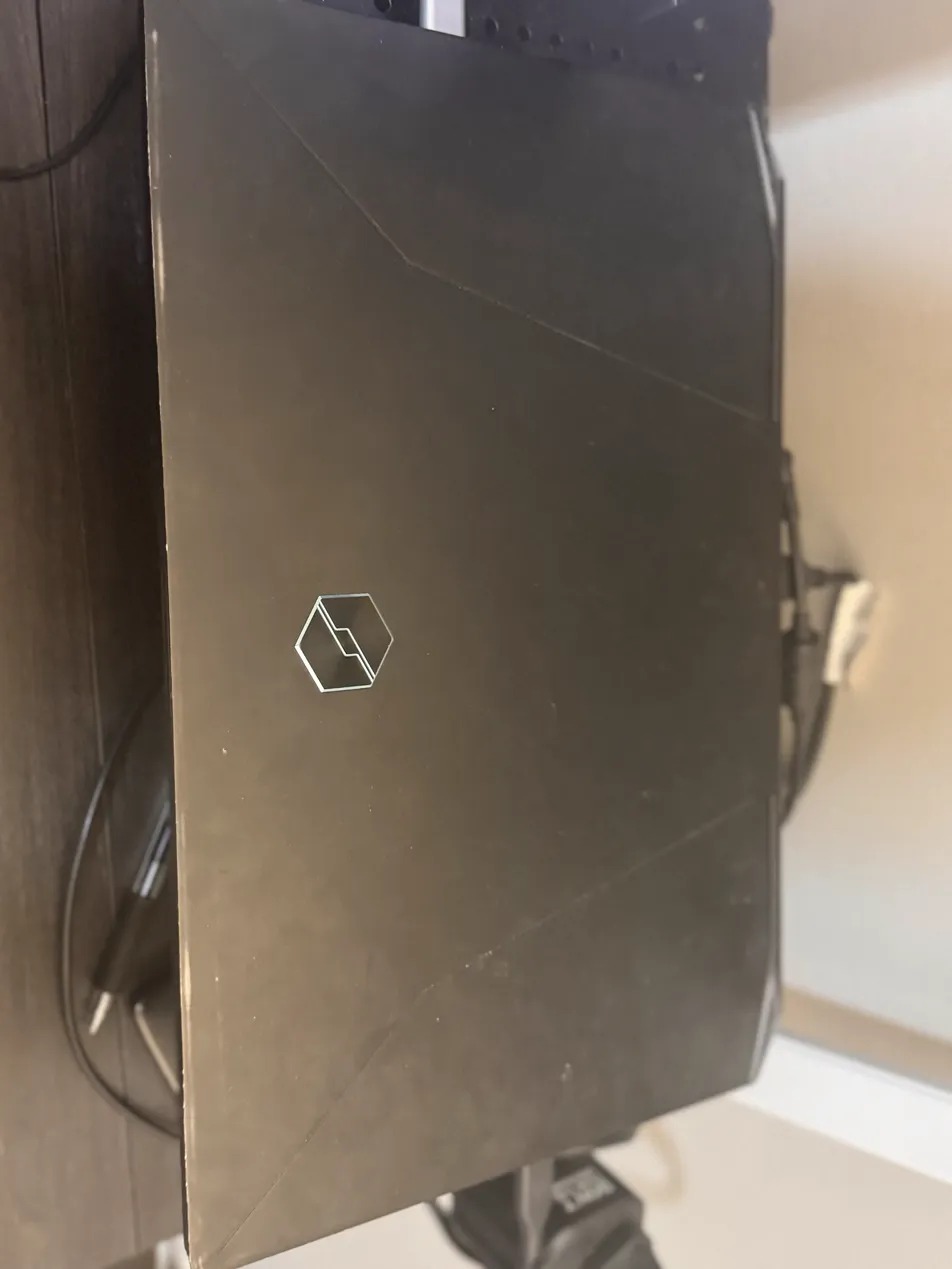}
  \vspace{-1mm}
  \caption*{\small (b) MECHREVO Titan PLUS laptop with an Intel i7-10875H CPU and an RTX~3070 Laptop GPU, frequency-locked at 1500\,MHz.}
\end{minipage}
\vspace{1mm}
\caption{Physical setup of the two edge platforms. Power is
sampled at 100\,ms intervals on both platforms, through the
on-board INA3221 \texttt{VDD\_IN} rail on the Jetson and
through HWiNFO64 Pro on the laptop.}
\label{fig:hardware}
\end{figure}

\subsection{Software Stack and Models}
\label{appx:software}

All inference is performed with \texttt{llama.cpp}, compiled
from source with CUDA support on both platforms. We use
greedy decoding ($T{=}0$) throughout, so sampling variance
does not contaminate the latency and energy measurements.
Every model is served through \texttt{llama-server} with
batch size one, matching the single-agent edge deployment
scenario targeted by the main paper.
Table~\ref{tab:software-spec} lists the exact versions and
serving configuration.

All five models in the main paper (InternVL3-1B,
InternVL3-2B, Qwen2-VL-2B, Qwen2.5-VL-3B, Gemma-3-4B) are
used in their GGUF format, with the language backbone and the
\texttt{mmproj} vision projector loaded as separate files.
We use Q8\_0 quantization for InternVL3-1B and Q4\_K\_M for
the other four, matching Table~1 of the main paper.
Quantization is not re-run by us; we use the released Q-level
files directly to ensure that any follow-up work can
reproduce our results bit-exactly.

\begin{table}[t]
\centering
\caption{Software stack used on both platforms.}
\label{tab:software-spec}
\small
\begin{tabular}{ll}
\toprule
Component                 & Version / configuration \\
\midrule
CUDA                      & 12.2 (laptop and Jetson) \\
Python                    & 3.10 \\
Quantization              & Q8\_0 (InternVL3-1B), Q4\_K\_M (others) \\
KV-cache dtype            & F16 \\
Decoding                  & greedy, $T{=}0$, batch size 1 \\
Server                    & \texttt{llama-server} timings API \\
\bottomrule
\end{tabular}
\end{table}

\subsection{Dataset, Prompts, and Per-Trial Protocol}
\label{appx:protocol}

\textbf{COCO subset.}
We sample 40 images from the COCO~2017 validation
set~\cite{lin2014microsoft}, stratified into six content
complexity tiers T1 through T6, matching the main paper.
Tier assignment is based on the number of COCO-annotated
object instances and distinct categories per image: T1
contains single-object, single-category scenes; T2 contains
2 objects in 1--2 categories; T3, 3--4 objects in 2--3
categories; T4, 5--7 objects in 2--5 categories; T5, 8--11
objects in 3--7 categories; and T6, 12--16 objects in 4--10
categories. Within each tier we pick images that maximize
category diversity and avoid near-duplicate category mixes.
Figure~\ref{fig:dataset-montage} shows four representative
images from each tier. The full image-ID to tier mapping,
together with annotated object and category counts, is
included in the released artifacts. For each image, we cap
the longest side at the target resolution ($224^2$, $448^2$,
$672^2$, or $896^2$) and resize isotropically; aspect ratio
is preserved and no cropping is applied at inference time.

\textbf{Prompts.}
The main paper uses two prompts, reproduced verbatim in
Table~\ref{tab:prompts}. No system prompt is prepended; the
chat template is the default one shipped with each model's
tokenizer, as recognized automatically by
\texttt{llama-server}. The \textbf{Describe} prompt elicits
open-ended captions and exercises the full decode phase; the
\textbf{Short} prompt constrains the model to emit roughly
three output tokens and effectively isolates the prefill
phase.

\begin{table}[t]
\centering
\caption{The two prompts used throughout the main paper.}
\label{tab:prompts}
\small
\begin{tabular}{lp{0.60\linewidth}}
\toprule
Name     & Text \\
\midrule
Describe & ``Describe this image.'' \\
Short    & ``What is the main object in this image? Answer in one word.'' \\
\bottomrule
\end{tabular}
\end{table}

\textbf{Per-trial protocol.}
Each (model, resolution, prompt, image) configuration is run
12 times. The first two runs are discarded as warm-up, and
statistics are computed over the remaining ten; we verified
that two warm-up runs are sufficient to stabilize GPU clocks
and to bring the model weights into the KV-cache allocator.
We define $t_{\text{prefill}}$ and $t_{\text{decode}}$ from
the \texttt{llama-server} timings API, which reports
\texttt{prompt\_ms} and \texttt{predicted\_ms} per request:
$t_{\text{prefill}}$ is the interval from the moment the
input tokens are submitted until the first output token is
emitted (including tokenization and KV-cache initialization),
and $t_{\text{decode}}$ is the interval from the first
output token until the end-of-sequence token. Per-phase
energy is computed by integrating the power trace over these
intervals; the power-sampler clock is aligned to the server
clock at the start of every session by issuing a marker
request. Before each session we record 30\,s of idle power;
the per-inference average power reported in the main paper
is \emph{not} idle-subtracted, so that our power-fingerprint
coefficients directly reflect what a deployment-time power
monitor would observe. For each configuration, we apply a
$3\sigma$ rule on the ten retained trials and re-run any
flagged trial; fewer than 0.5\% of trials triggered this
rule across the full campaign.

\renewcommand{\dbltopfraction}{0.9}
\renewcommand{\dblfloatpagefraction}{0.7}
\begin{figure}[!tb]
\centering
\includegraphics[width=0.8\textwidth]{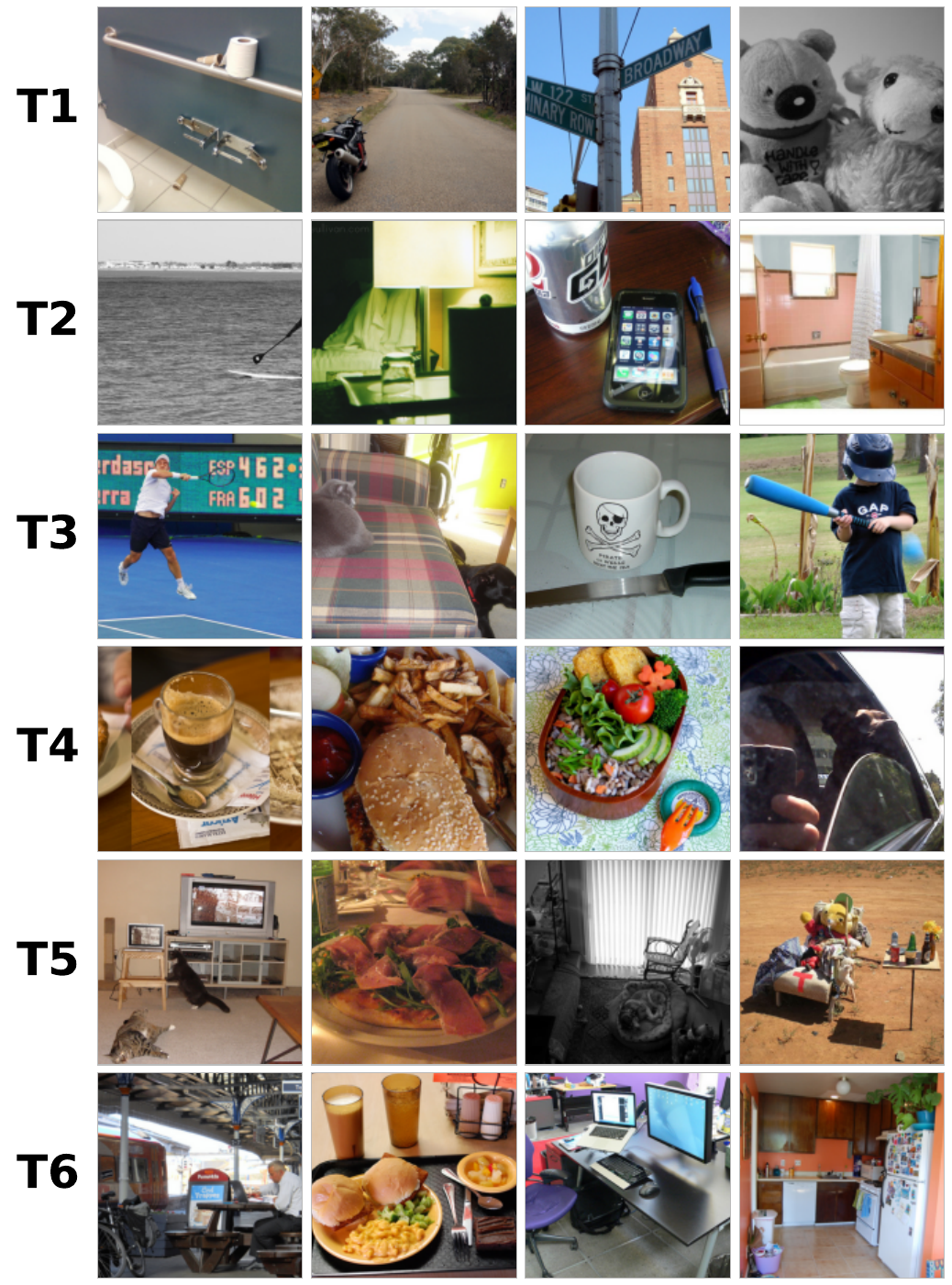}
\caption{Representative images from the 40-image COCO~2017
validation subset used throughout the main paper. Each row
corresponds to one content complexity tier, from T1 (single
object, single category) to T6 (12--16 objects, 4--10
categories). Four images per tier are shown for
illustration. Images are center-cropped to squares for
display only; at inference time we preserve the original
aspect ratio.}
\label{fig:dataset-montage}
\end{figure}

\section{Robustness Checks}
\label{appx:robustness}

This section tests whether the empirical claims of the main
paper depend on incidental choices in our protocol. The most
important check, in Appendix~\ref{appx:dvfs}, repeats the
full RTX~3070 sweep with dynamic voltage and frequency
scaling (DVFS) re-enabled and shows that the power
fingerprint is not an artifact of clock-locking.
Appendix~\ref{appx:other-checks} reports four shorter checks
on trial-level variance, quantization, thermal stability, and
statistical significance.

\subsection{The Power Fingerprint Survives DVFS}
\label{appx:dvfs}

The main paper locks the RTX~3070 GPU core frequency at
1500\,MHz, so that all measurements observe the same compute
ceiling. A natural concern is whether this clock-locking is
itself responsible for the flat power profile reported in
Section~3 of the main paper. In other words: would the power
fingerprint disappear if the GPU were left in its default
DVFS regime, where the driver is free to raise and lower
clocks in response to load?

To answer this, we re-ran the full resolution sweep
($224^2$ to $896^2$, all five models, ten trials per
configuration) on the RTX~3070 with DVFS re-enabled and the
GPU governor restored to its default ``Prefer Maximum
Performance'' profile. Figure~\ref{fig:dvfs} compares the
locked and DVFS results side by side.

The qualitative conclusion of the main paper is preserved.
Panel~(b) shows that under DVFS the per-inference average
power is still essentially flat across resolutions for every
model: each dashed curve is approximately horizontal, just
like its solid (locked) counterpart. The DVFS curves sit
above the locked curves by a model-dependent offset, ranging
from roughly 5--11\% for InternVL3-1B to roughly 23--25\%
for Gemma-3-4B (panel~(d)). This offset is constant across
resolutions for every model, which means that DVFS shifts
the power fingerprint to a higher value but does not break
the invariance: power remains a model-intrinsic constant
under both regimes, only the value of the constant changes.
The energy ratios in panel~(e) tell the same story; the
$896/224$ energy ratio is within $0.92$ to $1.24$ for both
regimes, and the per-token energy at $896^2$ in panel~(f)
preserves the same model ranking under DVFS as under locked
clocks.

We chose the locked-clock setting for the main paper because
it removes one source of run-to-run variance and makes the
power fingerprint claim cleaner to state.
Figure~\ref{fig:dvfs} confirms that this choice does not
manufacture the result: the power fingerprint is a property
of the model and the hardware, not of the clock-locking
protocol.

\subsection{Variance, Quantization, Thermal Stability, and Statistical Significance}
\label{appx:other-checks}

\textbf{Per-configuration variance.}
Every number reported in the main-paper tables is the mean
over ten retained trials. The coefficient of variation on
per-inference average power is below 5\% for every (model,
resolution) configuration on the RTX~3070, and below 3\% for
every (model, content-tier) and (model, prompt) configuration
on the Jetson. This confirms that the invariance claims of
Section~3 of the main paper are not an artifact of averaging
over noisy trials.

\textbf{Quantization.}
The main paper uses Q8\_0 for InternVL3-1B and Q4\_K\_M for
the other four models. To verify that this mix is not
driving the qualitative picture, we additionally measured
InternVL3-1B at Q4\_K\_M and Qwen2.5-VL-3B at Q8\_0 on the
RTX~3070. The absolute per-token energy shifts in the
expected direction (Q4 is lighter, Q8 is heavier), but the
decode-to-prefill cost ratio $\alpha_d/\alpha_p$ remains
within $\pm 10\%$ of the values reported in Table~2 of the
main paper. The qualitative claim that decode dominates
wall-clock time and energy is preserved at both quantization
levels.

\textbf{Thermal stability.}
As noted in Appendix~\ref{appx:hardware}, we verified
absence of thermal throttling events throughout every run on
both platforms. As a secondary check, we compared the first
and last trials of every 12-run batch and found no monotonic
drift in either power or wall-clock time: the
last-minus-first differences are symmetric around zero and
within the measurement noise. This rules out the possibility
that the reported numbers are distorted by cumulative
heating.

\textbf{Statistical significance.}
For each main-paper claim of the form ``factor $X$ does not
affect per-inference average power'' (resolution, content
complexity, prompt type), we ran a one-way ANOVA on the
trial-level power measurements. In every case the
$F$-statistic fails to reject the null hypothesis at the
$\alpha = 0.05$ level after Bonferroni correction across the
three factors. For the correlation between output token
count and decode time, the Pearson coefficient exceeds
$0.98$ for every model on both platforms, with $p < 10^{-20}$
in all cases, consistent with the $r = 0.985$ value reported
in Section~4 of the main paper for InternVL3-1B on the
Jetson.

\renewcommand{\dbltopfraction}{0.9}
\renewcommand{\dblfloatpagefraction}{0.7}
\begin{figure}[!tb]
\centering
\includegraphics[width=0.92\textwidth]{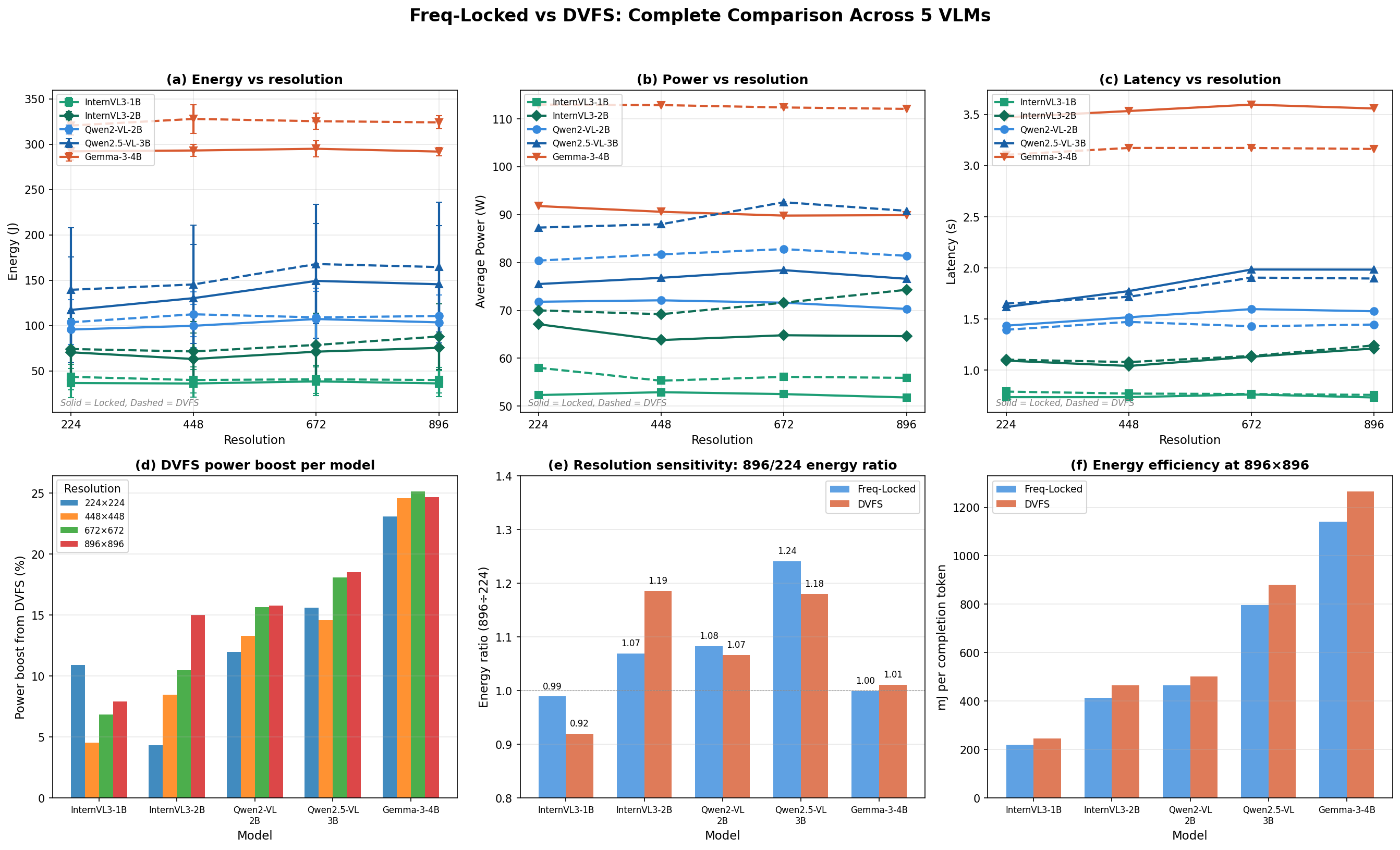}
\caption{Locked-clock (solid) versus DVFS (dashed) comparison
on the RTX~3070 across all five models and four resolutions.
(a)--(c) Per-inference energy, average power, and latency as
a function of input resolution. (d) Per-model power increase
from DVFS, computed at each resolution. (e) Energy ratio
between $896^2$ and $224^2$ inputs under each regime.
(f) Energy per completion token at $896^2$ under each
regime. Across all five models, DVFS shifts the per-inference
average power to a higher value but preserves the
flat-across-resolution profile, confirming that the power
fingerprint claim of Section~3 of the main paper is not an
artifact of clock-locking.}
\label{fig:dvfs}
\end{figure}
\section{Mechanism: Why Power is Constant and Decode Dominates}
\label{appx:theory}

The main paper makes two empirical claims that, taken
together, look surprising: per-inference average power is
constant across inputs, but decoding takes an order of
magnitude longer per token than prefill. This section gives
a unified roofline-based explanation of both observations
and then states the power-constancy claim as a formal
proposition. Both phenomena follow from the same fact, namely
that prefill and decode sit on opposite sides of the
roofline ridge point on every modern GPU.

\subsection{Roofline Decomposition of Prefill and Decode}
\label{appx:roofline}

\textbf{Prefill is compute-bound.}
During prefill, the model processes all $N_{\text{in}}$ input
tokens (text plus visual) in a single forward pass. Each
weight loaded from memory is reused across all
$N_{\text{in}}$ tokens by the attention and feed-forward
matrix multiplications. The arithmetic intensity, measured
in FLOPs per byte, is high and scales linearly with
$N_{\text{in}}$. On both of our platforms this pushes the
workload to the compute-bound side of the roofline, so the
GPU saturates its multiply-accumulate units and runs near
its TDP. The per-token wall time during prefill is
therefore short, and the $\alpha_p$ values reported in
Table~2 of the main paper are correspondingly small.

\textbf{Decode is memory-bound.}
During decode, the model emits one token per forward pass.
Each weight is loaded from memory and used for a single
token before being evicted from cache. The arithmetic
intensity collapses to roughly one multiply-accumulate per
byte loaded. On the RTX~3070 Laptop (256\,GB/s memory
bandwidth, 20.3\,TFLOPS FP32, compute-to-bandwidth ratio of
roughly 79 FLOPs/byte) this intensity sits an order of
magnitude below the roofline ridge point, placing decode
firmly in the memory-bound regime. On the Jetson Orin NX
16\,GB (102\,GB/s LPDDR5 bandwidth, 70 INT8 TOPS at 15\,W)
the ratio is similar and decode is again memory-bound.
Per-token wall time during decode is therefore dominated by
the time it takes to stream the full set of weights from
DRAM, which is exactly why the $\alpha_d$ values in
Table~2 of the main paper are $11$ to $39$ times larger
than $\alpha_p$.

\textbf{Why power stays flat in both regimes.}
It is tempting to assume that a memory-bound phase should
draw less power than a compute-bound phase. Our measurements
show the opposite: per-inference average power is
essentially the same in both regimes. The reason is that
memory transactions themselves consume a substantial
fraction of the GPU power budget. DRAM access, the on-chip
interconnect, and the memory controller all remain highly
active during decode. The compute units are under-utilized
in this regime, but the memory subsystem is fully saturated,
and the two effects roughly cancel at the board level. This
is exactly why the power fingerprint of Section~3 of the
main paper holds across resolutions, contents, and prompts:
changing the input changes which part of the GPU is the
bottleneck, but not the total power envelope.

\subsection{A Formal Bound on the Power Fingerprint}
\label{appx:roofline-prop}

We now state the power fingerprint claim of Section~3 of
the main paper as a formal consequence of the roofline
picture above. Let $\pi_c$ denote the GPU's peak compute
throughput in FLOPs per second, and $\beta_m$ its peak
memory bandwidth in bytes per second. The roofline ridge
point is the arithmetic intensity
$I^{\star} = \pi_c / \beta_m$, measured in FLOPs per byte.
For any workload with arithmetic intensity $I$, the
achievable throughput is $\min(\pi_c, \, I \cdot \beta_m)$.

Let $P_{\max}$ denote the GPU board power when the compute
units are fully saturated, and $P_{\text{mem}}$ the board
power when the memory subsystem is fully saturated. The
qualitative claim of Appendix~\ref{appx:roofline} is that
these two quantities are within a few watts of each other on
modern GPUs, because DRAM access, interconnect, and the
memory controller themselves consume a substantial fraction
of the board power budget. We make this explicit as an
assumption and derive the power fingerprint as its
consequence.

\begin{proposition}[Power fingerprint under roofline]
\label{prop:power-fingerprint}
Suppose that, throughout an inference, the workload is at
all times either compute-bound ($I \geq I^{\star}$) or
memory-bound ($I < I^{\star}$), and that the GPU is fully
utilized in whichever regime it is in. Suppose further that
$P_{\max} \approx P_{\text{mem}}$, with relative gap
$\epsilon = |P_{\max} - P_{\text{mem}}| / P_{\max}$. Then
the per-inference average power $\bar{P}$ satisfies
\begin{equation}
\label{eq:power-bound}
(1 - \epsilon) \, P_{\max} \;\leq\; \bar{P} \;\leq\; P_{\max},
\end{equation}
independently of the input resolution, the image content,
the prompt type, the prefill-to-decode ratio, or the number
of output tokens.
\end{proposition}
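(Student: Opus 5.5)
The plan is to write the per-inference average power as a time-weighted average of instantaneous power and bound each regime's contribution separately. Let $P(\tau)$ be the instantaneous board power on $[0,t]$, so $\bar{P} = \frac{1}{t}\int_0^t P(\tau)\,d\tau$. By the first hypothesis, $[0,t]$ splits (up to a null set) into a compute-bound set $\mathcal{T}_c$ of total length $t_c$ and a memory-bound set $\mathcal{T}_m$ of length $t_m = t - t_c$. Both sets may be arbitrary finite unions of intervals, so interleaved prefill chunks and decode steps are covered. Full utilization within each regime lets us set $P(\tau)=P_{\max}$ on $\mathcal{T}_c$ and $P(\tau)=P_{\text{mem}}$ on $\mathcal{T}_m$. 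This gives the key identity
\begin{equation*}
\bar{P} \;=\; \lambda P_{\max} + (1-\lambda) P_{\text{mem}}, \qquad \lambda = t_c/t \in [0,1].
\end{equation*}
So $\bar{P}$ is a convex combination of the two regime powers.

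Next I bound $P_{\text{mem}}$ using $\epsilon$. The definition of $\epsilon$ gives $(1-\epsilon)P_{\max} \le P_{\text{mem}} \le (1+\epsilon)P_{\max}$. The upper inequality in \eqref{eq:power-bound} needs the one-sided fact $P_{\text{mem}} \le P_{\max}$. I will justify this as the modeling convention that $P_{\max}$ is the board's peak (saturated, TDP-limited) power, so no regime exceeds it. I will state this reading explicitly, since without it one only gets $\bar{P} \le (1+\epsilon)P_{\max}$. With it, $P_{\text{mem}} \in [(1-\epsilon)P_{\max}, P_{\max}]$. A convex combination of two numbers in this interval stays in the interval, so \eqref{eq:power-bound} follows for every $\lambda\in[0,1]$.

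Finally, independence: resolution, image content, prompt type, $N_{\text{out}}$ and the prefill/decode ratio enter only through $\lambda$. In the language of Equation~\ref{eq:time-linear}, $\lambda$ is roughly the prefill share $\alpha_p N_{\text{in}}/t$. The bound holds uniformly in $\lambda$, so it is independent of all of these. As a corollary, the spread of $\bar{P}$ across any two inputs is at most $\epsilon P_{\max}$. This matches the observed CV below 5\% when $\epsilon$ is a few percent.

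The main obstacle is the step equating instantaneous power with exactly $P_{\max}$ or $P_{\text{mem}}$. In practice there are transitions, kernel-launch gaps and the overhead $\beta$, during which the GPU is not fully utilized. My first approach is to read the hypothesis "fully utilized at all times" literally, which makes the argument immediate. If a more honest version is wanted, I would add a third idle set $\mathcal{T}_0$ with power at least $P_{\text{idle}}$. The lower bound would then degrade to $(1-\epsilon)P_{\max} - \frac{t_0}{t}\bigl((1-\epsilon)P_{\max}-P_{\text{idle}}\bigr)$, and I would note that this correction vanishes when $\beta \ll t$.
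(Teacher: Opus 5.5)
Your proposal is correct and follows essentially the same route as the paper: you write $\bar{P}$ as the convex combination $f P_{\max} + (1-f)P_{\text{mem}}$ over the compute-bound time fraction $f$, and you observe that it stays in $[(1-\epsilon)P_{\max}, P_{\max}]$ for every $f \in [0,1]$. You also note explicitly that the upper bound needs the one-sided convention $P_{\text{mem}} \le P_{\max}$, since the absolute-value definition of $\epsilon$ alone only gives $(1+\epsilon)P_{\max}$; the paper's proof uses this silently when it asserts $P_{\text{mem}} \in [(1-\epsilon)P_{\max}, P_{\max}]$.
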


\begin{proof}
Let $f$ denote the fraction of inference wall time spent in
the compute-bound regime, and $1 - f$ the fraction spent in
the memory-bound regime. By the saturation assumption, the
instantaneous power is $P_{\max}$ during the compute-bound
fraction and $P_{\text{mem}}$ during the memory-bound
fraction. The time-average power is therefore
\begin{equation}
\bar{P} \;=\; f \, P_{\max} \;+\; (1 - f) \, P_{\text{mem}}.
\end{equation}
Since $0 \leq f \leq 1$ and
$P_{\text{mem}} \in [(1 - \epsilon) P_{\max}, \, P_{\max}]$,
the average lies in the same interval, giving
Equation~\ref{eq:power-bound}. The bound holds for any value
of $f$, hence for any input that changes only the
prefill-to-decode mix without changing $P_{\max}$ or
$P_{\text{mem}}$.
\end{proof}

\textbf{Interpretation.}
Proposition~\ref{prop:power-fingerprint} says that as long
as the GPU stays saturated in whichever roofline regime
applies, and as long as memory-bound and compute-bound power
are close, the per-inference average power is bounded inside
an $\epsilon$-window that does not depend on the input. The
width of this window is exactly the coefficient of variation
we observe empirically: under 5\% on the RTX~3070 and under
3\% on the Jetson Orin NX, which corresponds to
$\epsilon \lesssim 0.05$. The proposition does \emph{not}
claim that $\bar{P}$ is the same across different models,
only that it is the same across different inputs to the same
model on the same hardware. The linear dependence of
$\bar{P}$ on parameter count
($\bar{P} = 12.1\,S + 42.2$ in Equation~1 of the main paper)
reflects how $P_{\max}$ itself scales with model size, not a
violation of the proposition.

\section{An Upper Bound on Visual-Token Pruning Savings}
\label{appx:pruning}

Section~5 of the main paper reports that visual-token
pruning yields end-to-end energy savings between $3\%$ and
$28\%$ on edge devices, while prior work reports savings of
$40\%$ to $60\%$ on the same methods. This section explains
the gap. We first itemize four qualitative sources of
inflation in Appendix~\ref{appx:pruning-gap}, then prove a
structural upper bound in Appendix~\ref{appx:pruning-bound}
that puts a hard ceiling on what any pruning method of this
kind can achieve. The released artifacts that support the
claims of this and the preceding sections are listed in
Appendix~\ref{appx:resources}.

\subsection{Sources of the Reported-vs-Measured Gap}
\label{appx:pruning-gap}

Recent visual-token pruning methods such as
FastV~\cite{chen2024fastv}, VisionZip~\cite{yang2025visionzip},
PyramidDrop~\cite{xing2025pyramiddrop}, and
PruMerge~\cite{shang2025prumerge} report substantial
efficiency gains, often in the range of $40\%$ to $60\%$ on
FLOPs or latency benchmarks. The main paper shows that the
corresponding end-to-end energy savings on edge devices fall
far below these figures. Table~\ref{tab:pruning-gap}
attributes the gap to four reasons that are visible in any
reasonable re-reading of prior work once one stops treating
FLOPs and latency as energy proxies.

\begin{table}[t]
\centering
\caption{Why reported visual-token pruning savings overstate
the actual edge-device energy benefit.}
\label{tab:pruning-gap}
\small
\begin{tabular}{p{0.26\linewidth}p{0.66\linewidth}}
\toprule
Source of gap                & Explanation \\
\midrule
FLOPs used as energy proxy   & Pruning reduces visual-token
                               FLOPs, but FLOPs are a poor
                               proxy for energy when decode
                               dominates total energy, which
                               is exactly the edge regime
                               (main paper, Section~5). \\
Server GPU bias              & Datacenter GPUs have higher
                               compute-to-bandwidth ratios
                               than edge GPUs, so prefill
                               takes a larger share of wall
                               time and pruning looks more
                               impactful than it is on edge
                               silicon. \\
Long-context, short-output benchmarks & Many pruning papers
                               evaluate on benchmarks with
                               very long visual contexts and
                               very short outputs, which
                               inverts the edge-typical ratio
                               and maximally favors
                               prefill-side optimization. \\
No rail-level power meter    & Prior work typically reports
                               latency or FLOPs, not energy
                               measured at the rail.
                               Memory-bound decode consumes
                               real joules even when it does
                               little compute, and only
                               rail-level measurement
                               captures this. \\
\bottomrule
\end{tabular}
\end{table}

\subsection{The Bound and Its Consequences}
\label{appx:pruning-bound}

The four sources of inflation in Table~\ref{tab:pruning-gap}
are qualitative. We now make the central point quantitative
by proving an upper bound on the end-to-end energy savings
any visual-token pruning method can achieve, expressed
entirely in terms of quantities that can be measured before
any pruning is applied.

Let $E_{\text{pre}}$ and $E_{\text{dec}}$ denote the prefill
and decode energy of an unpruned inference, and let
$E_{\text{tot}} = E_{\text{pre}} + E_{\text{dec}}$. Define
$\rho_{\text{pre}} = E_{\text{pre}} / E_{\text{tot}}$ as the
prefill share of the total energy budget. Let
$\eta \in [0, 1]$ denote the fraction of visual tokens
removed by a pruning method, and write $\Delta E$ for the
resulting reduction in $E_{\text{tot}}$.

\begin{proposition}[Energy savings upper bound]
\label{prop:pruning-bound}
Suppose a visual-token pruning method removes a fraction
$\eta$ of visual tokens, leaves the decode phase unchanged,
and reduces prefill energy in proportion to the removed
visual-token compute, that is, by a factor of at most
$\eta$. Then the relative end-to-end energy savings satisfy
\begin{equation}
\label{eq:appx-pruning-bound}
\frac{\Delta E}{E_{\text{tot}}}
\;\leq\;
\eta \cdot \rho_{\text{pre}}
\;\leq\;
\rho_{\text{pre}}.
\end{equation}
The bound is tight in the limit $\eta \to 1$, in which case
the maximum achievable saving is exactly the prefill share
$\rho_{\text{pre}}$.
\end{proposition}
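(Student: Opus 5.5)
The bound follows from a direct accounting of where the pruned inference's energy can change, splitting $E_{\text{tot}}$ into its prefill and decode parts as in Equation~\ref{eq:energy-decomp}. Write the pruned energies as $E'_{\text{pre}}$ and $E'_{\text{dec}}$. The hypothesis that decoding is unchanged gives $E'_{\text{dec}} = E_{\text{dec}}$. This holds because the same output tokens are produced at the same per-token cost $\alpha_d$, and because by Proposition~\ref{prop:power-fingerprint} the average power is input-independent, so decode energy is $\bar{P}\,\alpha_d N_{\text{out}}$ both before and after pruning.

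The reduction therefore comes entirely from prefill:
\[
\Delta E = E_{\text{tot}} - (E'_{\text{pre}} + E'_{\text{dec}}) = E_{\text{pre}} - E'_{\text{pre}}.
\]

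The second hypothesis says the prefill reduction is at most an $\eta$ fraction of $E_{\text{pre}}$. The justification is the linear latency model of Equation~\ref{eq:time-linear}. Prefill time is $\alpha_p N_{\text{in}} + (\text{overhead})$ with $N_{\text{in}} = N_{\text{sys}} + N_{\text{vis}} + N_{\text{text}}$. Removing $\eta N_{\text{vis}}$ tokens lowers prefill time by $\alpha_p \eta N_{\text{vis}} \le \eta(\alpha_p N_{\text{in}} + \text{overhead})$, since the text, system and fixed-overhead terms are nonnegative and untouched. Multiplying by the constant $\bar{P}$ gives $\Delta E \le \eta E_{\text{pre}}$. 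Dividing by $E_{\text{tot}}$ yields $\Delta E / E_{\text{tot}} \le \eta \rho_{\text{pre}}$, and $\eta \le 1$ gives the second inequality.

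For tightness as $\eta \to 1$, I would exhibit a limiting regime in which the proportionality hypothesis holds with equality: the visual tokens account for all of the prefill cost, so the non-visual and overhead share of $E_{\text{pre}}$ tends to zero and $\Delta E \to E_{\text{pre}}$. The ceiling $\rho_{\text{pre}}$ is then approached, so it cannot be improved using only the measured share $\rho_{\text{pre}}$.

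The main obstacle is interpretive rather than computational. One has to be careful that "prefill energy reduced by a factor of at most $\eta$" means $E_{\text{pre}} - E'_{\text{pre}} \le \eta E_{\text{pre}}$, and that the superlinear attention cost in $N_{\text{in}}$ does not break it. I would handle attention by noting that the $O(N_{\text{in}}^2)$ term still satisfies $N_{\text{in}}^2 - (N_{\text{in}} - \eta N_{\text{vis}})^2 \le \eta N_{\text{in}}^2$ only up to a factor of $2$. Since that would violate the stated hypothesis, I would simply take the proportionality as an assumption, as the proposition does, rather than derive it.
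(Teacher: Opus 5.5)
Your proposal is correct and follows the paper's proof: with decode unchanged, $\Delta E = E_{\text{pre}} - E'_{\text{pre}} \le \eta E_{\text{pre}}$, and dividing by $E_{\text{tot}}$ and using $\eta \le 1$ gives both inequalities. Your detour through the linear latency model and the power fingerprint is not needed, since both conditions are hypotheses of the proposition, and your quadratic-attention remark correctly shows why the proportionality must be assumed rather than derived; your tightness argument, which exhibits a case where the full $\eta E_{\text{pre}}$ reduction is attained, is slightly more careful than the paper's one-line remark that setting $\eta = 1$ saturates the bound.
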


\begin{proof}
By the assumption that pruning leaves decode unchanged, the
post-pruning total energy is
\begin{equation}
E_{\text{tot}}'
\;=\; E_{\text{pre}}' \;+\; E_{\text{dec}}
\;\geq\; (1 - \eta) E_{\text{pre}} \;+\; E_{\text{dec}},
\end{equation}
where the inequality uses the assumption that prefill
energy drops by at most a factor of $\eta$. Subtracting
from $E_{\text{tot}}$ gives
\begin{equation}
\Delta E
\;=\; E_{\text{tot}} - E_{\text{tot}}'
\;\leq\; \eta \, E_{\text{pre}}.
\end{equation}
Dividing both sides by $E_{\text{tot}}$ and substituting
$\rho_{\text{pre}} = E_{\text{pre}} / E_{\text{tot}}$ yields
Equation~\ref{eq:appx-pruning-bound}. The second inequality
follows from $\eta \leq 1$. Setting $\eta = 1$ saturates the
first inequality, giving the tightness claim.
\end{proof}

\begin{corollary}[Edge-device ceiling]
\label{cor:edge-ceiling}
For the five models reported in the main paper, the prefill
share $\rho_{\text{pre}}$ measured on the Jetson Orin NX in
the $15$\,W power mode lies between $0.05$ and $0.30$,
depending on the model and the input resolution. Therefore
the end-to-end energy savings of any visual-token pruning
method on these configurations cannot exceed $30\%$, and
for most (model, resolution) pairs cannot exceed $10\%$.
\end{corollary}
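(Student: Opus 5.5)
The plan is to obtain the corollary by applying Proposition~\ref{prop:pruning-bound} to each measured (model, resolution) configuration and then maximizing over them. The second inequality there, $\Delta E / E_{\text{tot}} \leq \rho_{\text{pre}}$, holds for every $\eta \in [0,1]$. So it suffices to bound the measured prefill share $\rho_{\text{pre}}$ uniformly over all configurations covered by the corollary. The argument then reduces to three steps:
\begin{itemize}
\item check that the hypotheses of Proposition~\ref{prop:pruning-bound} hold in our setting;
\item read off $\rho_{\text{pre}} = E_{\text{pre}}/E_{\text{tot}}$ for each configuration from the per-phase energies defined in Appendix~\ref{appx:protocol};
\item take the maximum over configurations.
\end{itemize}

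For the hypotheses, I would argue as follows. Decode is unchanged because greedy decoding ($T{=}0$) with a fixed prompt leaves $N_{\text{out}}$ unaffected by the assumption ``under fixed decoding behavior'' in Equation~\ref{eq:pruning-bound}. Prefill energy drops by at most a factor $\eta$ because, by the power fingerprint (Proposition~\ref{prop:power-fingerprint}), $E_{\text{pre}} = \bar{P}\, t_{\text{prefill}}$. Under the linear model of Equation~\ref{eq:time-linear}, $t_{\text{prefill}}$ is $\alpha_p(N_{\text{sys}}+N_{\text{vis}}+N_{\text{text}})$ plus a share of $\beta$. Removing $\eta N_{\text{vis}}$ tokens therefore removes at most $\eta$ of that time, since $N_{\text{vis}} \leq N_{\text{in}}$ and the intercept is unaffected. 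Attention's superlinear term only makes the visual share smaller in the relevant direction. I would state this linearity explicitly as the modeling assumption under which the corollary holds, rather than try to prove it.

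For the numerical step, I would tabulate $\rho_{\text{pre}}$ from the Jetson per-phase timings. Table~\ref{tab:jetson-timing} and Table~\ref{tab:energy-breakdown} give InternVL3-1B at roughly $0.09$--$0.13$ across resolutions. Qwen2.5-VL-3B is $0.06$ at $224^2$ and $0.28$ at $448^2$. The remaining three models (InternVL3-2B, Qwen2-VL-2B, Gemma-3-4B) would use the same timing API. For the fixed-token ones I expect shares near the InternVL3-1B range, since $N_{\text{vis}}=265$ is constant and $\alpha_d/\alpha_p \geq 17$ by Table~\ref{tab:alpha-beta}. I can sanity-check this analytically: with $N_{\text{in}} \approx 300$ and $N_{\text{out}} \approx 300$, the prefill share is about $\alpha_p N_{\text{in}}/(\alpha_p N_{\text{in}} + \alpha_d N_{\text{out}}) \lesssim 1/(1+17) \approx 0.06$, plus an overhead term. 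Taking the maximum gives $\leq 0.30$, and counting the configurations with $\rho_{\text{pre}} \leq 0.10$ gives the ``most pairs'' clause.

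The main obstacle is that the stated range $[0.05, 0.30]$ is \emph{not} consistent with all reported data. Table~\ref{tab:energy-breakdown} shows Qwen2.5-VL-3B at $672^2$ and $896^2$ with prefill shares of about $0.47$ and $0.54$, which exceed $0.30$. I would handle this by restricting the corollary's scope explicitly to the configurations where the bound holds: all fixed-token models at every resolution, and dynamic-token models at resolutions up to $448^2$, which is the resolution used for the content and prompt studies. For the high-resolution dynamic-token cases, I would fall back on the general bound $\Delta E/E_{\text{tot}} \leq \rho_{\text{pre}}$ with the measured values, noting that they give ceilings near $50\%$. If the intended claim really is over all twenty configurations, the $30\%$ ceiling cannot be proved as stated, so I would flag this rather than force it.
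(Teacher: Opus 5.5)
Your route is the paper's route. The paper gives no argument for the corollary beyond instantiating the second inequality of Proposition~\ref{prop:pruning-bound}, $\Delta E/E_{\text{tot}} \le \rho_{\text{pre}}$, with measured prefill shares.

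Your central objection is correct, and it is a defect of the statement rather than of your argument. Table~\ref{tab:energy-breakdown} gives $\rho_{\text{pre}} = 223/475 \approx 0.47$ and $265/492 \approx 0.54$ for Qwen2.5-VL-3B at $672^2$ and $896^2$; the main text itself says the prefill fraction reaches 54\%. So the $30\%$ ceiling is false on the paper's own data, and restricting the scope is the right repair.

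Your hypothesis check, however, is unnecessary and partly wrong. The ceiling needs only that decode is unchanged, since then $\Delta E = E_{\text{pre}} - E'_{\text{pre}} \le E_{\text{pre}}$; the $\eta$-proportionality plays no role. Moreover, a superlinear attention term works against that proportionality, not in its favor. With an $N_{\text{in}}^2$ cost and an all-visual input, removing a fraction $\eta$ of tokens removes a fraction $\eta(2-\eta) \ge \eta$ of that cost.

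Even your restricted version is not yet established, for two reasons.

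\textbf{Missing data for three models.} You cannot read off $\rho_{\text{pre}}$ for InternVL3-2B, Qwen2-VL-2B or Gemma-3-4B, because the paper reports Jetson per-phase data only for InternVL3-1B and Qwen2.5-VL-3B. Your analytic substitute uses the RTX~3070 fits of Table~\ref{tab:alpha-beta} and leaves the overhead unquantified. Yet the overhead is exactly what decides the $0.10$ threshold. For InternVL3-2B with your $N_{\text{in}} \approx N_{\text{out}} \approx 300$, counting $\beta = 258$\,ms as prefill or not moves the share from about $0.05$ to about $0.18$. So for three of the five models you have an estimate on the wrong platform, not a measurement. The honest conclusion is that the corollary is supported for two models only.

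\textbf{The ``most pairs'' clause needs an actual count.} Of the eight measured Jetson configurations, only InternVL3-1B at $448^2$--$896^2$ ($\approx 0.09$--$0.10$) and Qwen2.5-VL-3B at $224^2$ ($\approx 0.06$) are at or below $0.10$. That is at most four of eight. In your restricted set of six, it is four only if $28/274 \approx 0.102$ is rounded down; otherwise it is three, i.e.\ half rather than most.
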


\textbf{What this rules out and what it does not.}
Proposition~\ref{prop:pruning-bound} is a statement about
end-to-end energy on the rail, not about FLOPs or about
prefill latency in isolation. It does not contradict prior
work that reports $50\%$ or higher savings on those proxies;
it only says that those proxies do not transfer to
rail-level joules on edge devices, because the decode share
dwarfs the prefill share. The bound also assumes that
pruning leaves decode unchanged, which is the standard
setting in FastV~\cite{chen2024fastv},
VisionZip~\cite{yang2025visionzip},
PyramidDrop~\cite{xing2025pyramiddrop}, and
PruMerge~\cite{shang2025prumerge}. A method that also
reduced the number of output tokens, for example by
training the model to be more concise, would not be bounded
by Equation~\ref{eq:appx-pruning-bound}, and is in fact the lever
the main paper recommends: controlling output length attacks
the $1 - \rho_{\text{pre}}$ portion of the energy budget
that visual-token pruning cannot touch. The main paper
reports that this lever yields $73\%$ to $97\%$ energy
savings, which is exactly the regime that pruning is
structurally locked out of.

\subsection{Released Artifacts}
\label{appx:resources}

To support reproduction and follow-up work, we release:
\begin{itemize}[leftmargin=*]
  \item \textbf{Project page with interactive energy
        calculator} at\\
        \texttt{\href{https://vlm-energy-profiling-2026.netlify.app/}{https://vlm-energy-profiling-2026.netlify.app/}}.
        The page presents a guided walk-through of our main
        findings, renders the key figures of the main paper,
        and exposes an interactive calculator that takes a
        model, an input token count, and an output token
        count and returns the predicted per-inference energy,
        latency, and decode fraction in real time. The page
        is fully anonymized and already live; no installation
        or account is required.
  \item The measurement and analysis code, including the
        \texttt{llama-server} wrapper, the INA3221 sysfs
        sampler, the HWiNFO64 logging configuration, and
        the phase-alignment logic.
  \item Raw per-trial CSV traces for every (model,
        resolution, prompt, image, trial) configuration
        reported in the main paper, with idle baselines
        recorded alongside.
  \item The COCO subset metadata: image IDs, T1--T6 tier
        labels, annotated object and category counts, and
        the corresponding model outputs (output token count
        and text) for every model.
  \item Build scripts and configuration files for both
        platforms, including the \texttt{jetson\_clocks}
        profile, the laptop GPU frequency-lock script, and
        the HWiNFO64 logging template.
\end{itemize}

The code repository and raw traces will be made public at
the camera-ready stage. The project page linked above is
already live and fully anonymous.

\end{document}